\newtheorem{theorem}{Theorem}
\newcommand {\shil}[1]{{\color{red}[From Lei: #1]}}
\newcommand{\veronika}[1]{\textcolor{blue}{[VT: #1]}}
\newcommand{\mkclean}{
   \renewcommand{\shil}[1]{}
   \renewcommand{\veronika}[1]{}
}
\title{Transformers over Directed Acyclic Graphs}
\author{%
  Yuankai Luo \\
  Beihang University\\
  \texttt{luoyk@buaa.edu.cn} \\
  % examples of more authors
  \AND
  Veronika Thost\thanks{Corresponding author.} \\
  MIT-IBM Watson AI Lab, IBM Research \\
  \texttt{veronika.thost@ibm.com} \\
  \And
  Lei Shi$^*$ \\
  Beihang University \\
  \texttt{leishi@buaa.edu.cn} \\
  % \And
  % Coauthor \\
  % Affiliation \\
  % Address \\
  % \texttt{email} \\
  % \And
  % Coauthor \\
  % Affiliation \\
  % Address \\
  % \texttt{email} \\
}
\begin{document}

\maketitle

\begin{abstract}
Transformer models have recently gained popularity in graph representation learning as they have the potential to learn complex relationships beyond the ones captured by regular graph neural networks.
The main research question is how to inject the structural bias of graphs into the transformer architecture,
and several proposals have been made for undirected molecular graphs and, recently, also for larger network graphs.
In this paper, we study transformers over directed acyclic graphs (DAGs) and propose architecture adaptations tailored to DAGs: 
(1) An attention mechanism that is considerably more efficient than the regular quadratic complexity of transformers and at the same time faithfully captures the DAG structure, and (2) a positional encoding of the DAG's partial order, complementing the former.
We rigorously evaluate our approach over various types of tasks, ranging from classifying source code graphs to nodes in citation networks, and show that it is effective in two important aspects: in making graph transformers generally outperform graph neural networks tailored to DAGs and in improving SOTA graph transformer performance in terms of both quality and efficiency.
\end{abstract}

\section{Introduction}

 Graph-structured data is ubiquitous in various disciplines \citep{gilmer2017neural,ZitnikAL18,physics} and hence graph representation learning has the potential to provide huge impact. There are various types of \emph{graph neural networks} (GNNs), the majority of which is based on a so-called message-passing scheme where node representations are computed iteratively by aggregating the embeddings of %their 
 neighbor nodes \citep{gilmer2017neural}. Yet, this mechanism in its basic form has limited expressivity \citep{xu2018powerful} and research is focusing on extensions. % \citep{corso2020principal}. % and there are phenomena impeding learning which have not been fully solved to date; for instance, the exponentially increasing receptive field  \citep{alon2020bottleneck,topping2021understanding}. 
 %Many extensions decouple the GNN's computational graph from the input graph 
 
 % have shown to be an efficient representation learning framework for graph-structured data. A broad family of GNNs construct multilayer models in which each layer operates on the previous layer to generate new representations utilizing a message-passing mechanism \citep{} to aggregate neighborhood-specific information. Even though those message-passing mechanisms have been developed, critical limits have been found. These include the limited expressiveness of GNNs as well as known problems such as over-smoothing and over-squashing \citep{li2018deeper,chen2020measuring}. Over-smoothing means the performance of GNNs does not improve as the number of layers increases. Over-squashing occurs when the distortion of information flowing from distant nodes as a factor limits the efficiency of message passing, since too many messages get compressed into a single fixed-length vector. 
 
\emph{Transformer models} have recently gained popularity in graph learning as they have the potential to learn complex relationships beyond the ones captured by regular GNNs, in a different way \citep{dwivedi2020generalization,kreuzer2021rethinking}. Technically, they can be considered as graph neural networks operating on fully-connected computational graphs, decoupled from the input graphs. The main research question in this context is how to inject the structural bias of the given input graphs (i.e., which nodes are actually connected) into the transformer architecture by adapting their attention and positional encoding appropriately.
Several promising proposals have been made to encode undirected molecular graphs \citep{ying2021transformers,ross2022molformer} and recent works take into account the scalability challenge \citep{dwivedi2022long}, also over larger network graphs \citep{rampavsek2022recipe,chen2022nagphormer}.

% Recently, graph Transformers alleviate these basic constraints of the sparse message passing mechanism by enabling nodes to attend to all other nodes in a graph (global attention), rather than only aggregating local neighborhood information. Additionally, graph Transformer addresses the expressivity limitation of GNNs by not introducing any structural inductive bias. Therefore, graph Transformer architecture has consequently attracted increasing interest in the field of graph representation learning.

We focus on \emph{directed acyclic graphs} (DAGs), which are of special interest across various domains; examples include parsing results of source code \citep{allamanis2018survey}, logical formulas \citep{crouse2019improving}, and conversational emotion recognition \citep{shen2021directed}, as well as probabilistic graphical models and neural architectures \citep{zhang2018graph, zhang2019d,  knyazev2021parameter}. In a DAG, the edges define a partial order over the nodes. This partial order represents an additional strong inductive bias, which offers itself to be integrated into the models. 

% The partial order is an additional strong inductive bias that one would naturally like to integrate into the neural network. 
In fact, various kinds of neural networks tailored to DAG structures have been proposed over the years; from early descriptions of recursive neural networks over DAGs \citep{sperduti1997supervised,frasconi1998general}, which have pointed out a similarity to sequence learning, %,di2006comparison} 
to more recent works, such as %Tree-LSTM \citep{tai2015improved}, 
DAG-RNN \citep{shuai2016dag}, DAG-LSTM \citep{crouse2019improving}, D-VAE \citep{zhang2019d}, and DAGNN \citep{thost2021directed}. The latter models focus on encoding the entire DAG; in a nutshell, they compute the embedding in a message-passing fashion by iterating over the DAG nodes in an asynchronous way, given by the partial order, and thereafter aggregating the final node representations into one for the DAG. This procedure yields state-of-the-art performance in terms of quality, but its asynchronous nature leads to comparatively (to regular GNNs) very slow runtime performance. 
The more parallel computation of transformers would seem to make them more suitable models and \citep{dong2022pace} have recently proposed one such model, PACE, which shows convincing performance in terms of both quality and efficiency. % Yet, %However, 
% interestingly, works on graph transformers more generally have dropped their graph-specific positional encodings in their DAG experiments \citep{chen2022structure, rampavsek2022recipe}, seemingly because they do not provide the right bias in the context of this type of graph.
% , i.e., D-VAE and DAGNN, aim to produce a representation for a DAG. They process information through the asynchronous message passing mechanism driven by the partial order, leading to limited expressiveness problems and slow training procedures.

% put here so that it appears on p.2
% \begin{figure*}[htb]   \center{\includegraphics[width=16cm]  {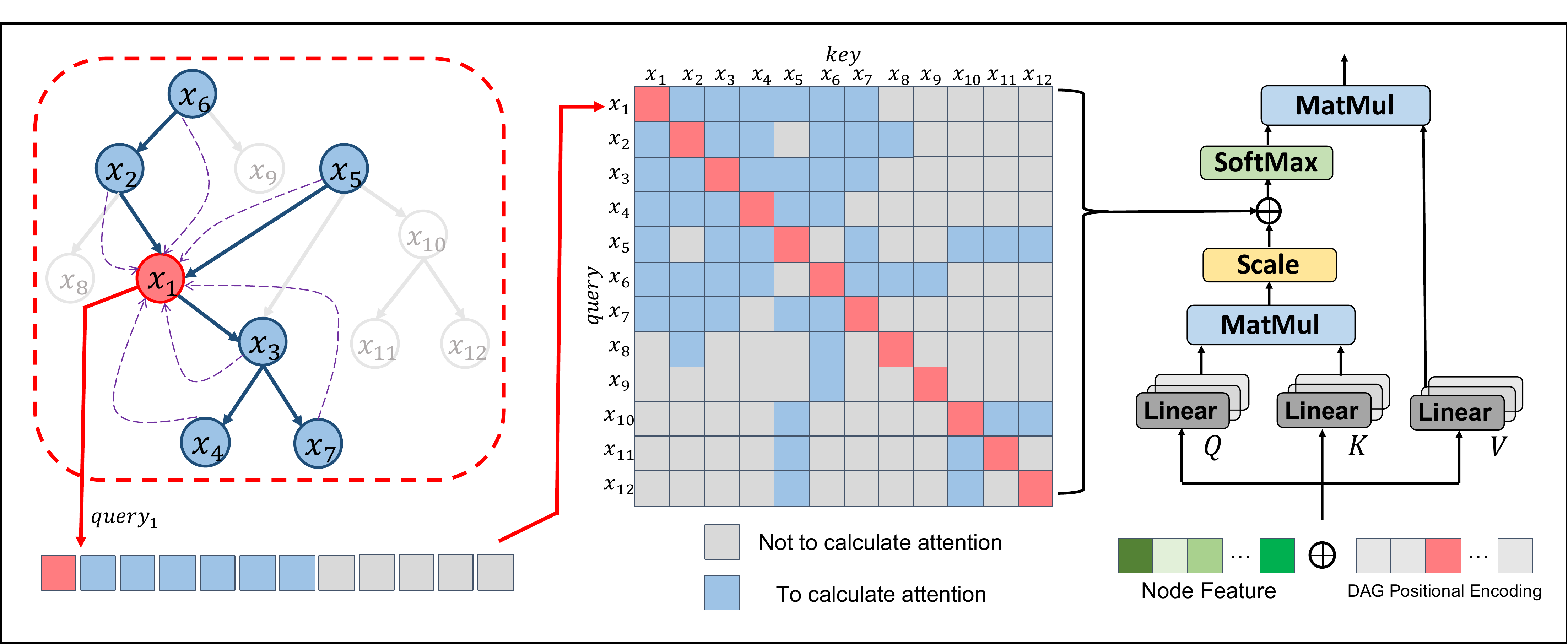}}   \caption{\label{1} Overview of the proposed DAG attention framework. A mask matrix is used to filter out nodes that are not reachable in the DAG from the receptive field of the node under consideration. The positional encoding additionally captures the node position in a fine-grained way as its depth in the DAG.} \label{fig:architecture} \end{figure*}
\begin{figure*}[t]   \center{\includegraphics[width=14cm]  {}}   \caption{\label{1} Overview of our DAG attention. A mask matrix restricts the receptive field of the node under consideration to nodes that are directly reachable in the DAG. Our positional encoding (right), additionally captures its position as its depth in the~DAG.}  \label{fig:architecture} \end{figure*}

In this paper, we focus on \emph{transformers over DAGs} more generally, motivated by the highly flexible, parallel, and expressive nature of their computation. 
In particular, their success in sequence learning opens up the question how they can be tailored to DAGs, which are essentially sequential graphs. Based on the above-mentioned insights in DAG learning, we propose a straightforward and efficient %have developed a 
\textbf{DAG attention framework}, which \emph{effectively biases any transformer towards~DAGs}.% OR 
% Motivated by the highly flexible, parallel, and expressive nature of the transformer computation, we study transformer architectures over DAGs in this paper. In particular, we propose the \textbf{DAGformer} framework:
% r—directed acyclic graph Transformer—that produce a representation for a DAG based on Transformer, to address the above two issues that hinder the development of DAGs representation learning. We include the bias of DAGs into graph Transformer by (1) restricting the attention according to the DAG reachability relation and by (2) including the DAG depth into the positional encoding. Our main contributions are summarized as follows:
% We propose DAGformer—directed acyclic graph Transformer—that produce a representation for a DAG based on Transformer, to address the above two issues that hinder the development of DAGs representation learning. We include the bias of DAGs into graph Transformer by (1) restricting the attention according to the DAG reachability relation and by (2) including the DAG depth into the positional encoding. Our main contributions are summarized as follows:
\begin{itemize}[leftmargin=*] % ,noitemsep,topsep=0pt
    \item As outlined in Figure~\ref{fig:architecture}, we %our architecture 
    (1) adapt the attention mechanism by restricting the receptive field of each node to its predecessor and successor nodes so that it faithfully \emph{captures the DAG structure - in terms of its reachability relation} - and at the same time gets \emph{considerably more efficient} than the regular quadratic complexity; and (2) employ the positional encoding in a complementary way, to further bias the computation towards the DAG topology, by explicitly \emph{encoding the node depth}. % within the DAG}.
    % We introduce a restricted self-attention mechanism , i.e., reachability relation attention that only considers the DAG reachability relation between nodes. In particular, we have modified the global attention mechanism by only allowing graph nodes to attend to their predecessors and successors on a DAG, which reduces much complexity.
    % \item We propose the DAG attention, which not only focuses on the graph node attributes but also on the DAG topology. Notably, the topology information is main encoded by the node depth within the DAG.
    \item We show that the attention is not restricted effectively, even if it seems so technically. Moreover, we draw a connection to the random walk theory.
    \item We rigorously evaluate our proposal in ablation studies and show that it successfully improves %is effective in improving 
    different kinds of baseline transformers, from vanilla transformers \citep{vaswani2017attention} to state-of-the-art graph transformers \citep{wu2021representing,chen2022structure,rampavsek2022recipe,wu2022nodeformer}, over various types of DAG data. Our experiments range from classifying source code graphs to nodes in citation networks, and even go far beyond related works' problem scope. Most importantly, our proposal is proven \emph{effective in %making graph transformers outperform %graph 
    % neural networks tailored to~DAGs.}
    two important aspects: in making graph %(graph) - to avoiud confusion
transformers generally %competitive to or 
outperform graph neural networks tailored to DAGs and in improving SOTA graph transformer performance in terms of both quality and efficiency.}
    \item Finally, \emph{our DAG attention framework can be implemented in two possible and rather straightforward ways}, either on top of existing transformer models, or based on message-passing GNNs. 
    Our implementation is available at \url{https://github.com/LUOyk1999/DAGformer}.
   % We provide our code in the supplementary and will make it publicly available upon acceptance.
    % \item We demonstrate the effectiveness of DAGformer on three DAGs benchmarks by showing it achieves better performance than state-of-the-art models with significant improvements.
\end{itemize}
% TODO code note

\section{Background and Related Works}
\label{sec:related}

\textbf{Directed Acyclic Graphs.}
%\label{ssec:DAG-preliminaries}
We refer to a \emph{graph} as tuple $G =(V, E, \mathbf{X})$, with node set $V$, edge set $E\subseteq V\times V$, and node features $\mathbf{X}\in \mathbb{R}^{n\times d}$, with each row representing the feature vector of one node, with number of nodes $n$ and feature dimension $d$, the features of node $v$ are denoted by $x_v\in\mathbb{R}^{d}$. 
A \emph{directed acyclic graph} (DAG) is a directed graph without directed cycles. For a DAG $G$, we can define a unique \emph{strong partial order} $\leqslant$ on the node set $V$, such that, for all pairs of nodes $u, v \in V$, $u \leqslant v$ if and only if there is a directed path from $u$ to $v$. We define the \emph{reachability relation} $ \leqslant $  over a DAG based on that partial order. That is, $u \leqslant v$ if and only if $v$ is reachable from $u$; further, if $u \leqslant v$, then $u$ is called a \emph{predecessor} of $v$, and $v$ is a \emph{successor} of $u$. All nodes without predecessors are \emph{source} nodes, and all nodes without successors are \emph{sink} nodes. We further define a restricted form of reachability $\leqslant_K$ with $u \leqslant_k v$ if and only if there is a directed path of length at most $k$ from $u$ to $v$.
The \emph{depth} of a node $v \in V$ is defined below. The \emph{depth} of a DAG is the maximum depth over its nodes.
 % and illustrated in~Figure~\ref{2}.
 \begin{linenomath*}
$$\mathit{depth}(v) = \begin{cases}
0 & \text{if $v$ is a source node} \\
1+\underset{\left( u,v \right) \in E}{\max}\,\,\mathit{depth}(u) & \text{otherwise} \\
\end{cases}$$
 \end{linenomath*}

% The \emph{depth} of a DAG is the maximum depth over its nodes.
% \begin{figure}[htb]   \center{\includegraphics[width=4cm]  {}}   \caption{\label{2} An illustration of node depth in DAG.}   \end{figure}
%As an example, consider the DAG in Figure \ref{2}. The depth of a node intuitively indicates the longest path distance from the source nodes, which can be seen as a measure of position in the DAG.
% We incorporate the mathematical properties of DAGs into the attention calculation in Transformer layers. 
% And we propose DAGPE, a PE based on Topological sorting.
% In this way, DAGformer can efficiently leverage self-attention, in which each node is aware of the nodes that are truly relevant to it.
% \begin{figure*}[htb]   \center{\includegraphics[width=16cm]  {dagformer.pdf}}   \caption{\label{1} Overview of an example DAG attention that uses a mask matrix to filter out nodes that we do not attend to. We first extract the $N_k(v)$ of each node $v$ and then generate mask matrix to modulate the self-attention module along with DAGPE.}   \end{figure*}
% \subsection{Graph Neural Networks}
% \label{ssec:gnn}

\textbf{Message-Passing Graph Neural Networks. }
Most modern \emph{graph neural networks} (GNNs) are or can be formulated in terms of the \emph{message-passing architecture}, a framework proposed in \citep{gilmer2017neural}. In that architecture, node representations are computed iteratively by aggregating the embeddings of their neighbor nodes (i.e., the messages), and a final graph representation can be obtained by aggregating the node embeddings. Yet, in its basic form, this mechanism has limited expressivity \citep{xu2018powerful} %and there are phenomena impeding learning which have not been fully solved to date; for instance, the exponentially increasing receptive field \citep{alon2020bottleneck}. Hence 
and graph neural networks remain an active research area. Notable works we compare to include the \textbf{graph convolutional network (GCN)} \citep{kipf2017semisupervised}\shil{Maybe reduce the use of bold text?}, a very early, basic architecture; the \textbf{graph attention network (GAT)} \citep{velivckovic2018graph}, aggregating the neighbor node embeddings using attention; the \textbf{graph isomorphism network (GIN)} \citep{xu2018powerful}, which integrates extra MLP layers into message passing for increased expressivity; and the \textbf{principal neighbourhood aggregation (PNA)} model \citep{corso2020principal}, a recent proposal focusing on adapting the aggregation mechanism to the node under consideration (e.g., scaling based on its degree).
For a broader overview of the field, we refer the reader to \citep{wu2020comprehensive}.
%, yet most of them 
%The leading architectural approach is a message-passing 
%Recently, message passing graph neural networks have become one of the leading approaches to learning graph representations. The GCN \citep{kipf2017semisupervised}, which was based on executing convolutions on the graph, is an early seminal example. Gilmer
%et al. \citep{gilmer2017neural} reformulated the early GNNs into a framework of message passing GNNs, which computes representations $h_{v}^{\ell}$ for all nodes $v$ in every layer $\ell$, and a final graph representation $h_{G}$ as
%\begin{equation}
%	h_{v}^{l}=U^{\ell}\left( h_{v}^{\ell -1},M^{\ell}\left( \left\{ h_{u}^{\ell -1}\mid u\in N\left( v \right) \right\} \right) \right),
%\end{equation}
%\begin{equation}
%    h_G=\mathrm{R}\left( \left\{ h_{v}^{L},v\in V \right\} \right),
%\end{equation}
%where $h_{v}^{0}=x_v$ is the input feature of node $v, {N}(v)$ denotes a neighborhood of $v$, $L$ is the number of layers, $M^{\ell}$ are message functions, $U^{\ell}$ are update functions, and $\mathrm{R}$ are readout functions. 
%The framework's extensive examples possess power for learning on many downstream tasks on graphs. However, they have issues with limited expressiveness, over-smoothing, and over-squashing \citep{li2018deeper,chen2020measuring}. 

\textbf{Transformers on Graphs.}
{Transformer} models \citep{vaswani2017attention} have %recently 
gained popularity in graph learning as they have the potential to learn complex relationships beyond the ones captured by regular GNNs and in a different way. 
The architecture is composed of two main parts: a \emph{self-attention module} and a feed-forward neural network, each followed by a residual connection with a normalization layer. 
 % \shil{In case of strict page limit, we could drop the following equations for Transformer, which are norms.}
Formally, the self-attention projects the input node features $\mathbf{X}$ using three matrices $\mathrm{W}_\mathbf{Q}\in \mathbb{R} ^{d\times d_K}$, $\mathrm{W}_\mathbf{K}\in \mathbb{R} ^{d\times d_K}$ and $\mathrm{W}_\mathbf{V}\in \mathbb{R} ^{d\times d_K}$ to the corresponding representations for query ($\mathbf{Q}$), key ($\mathbf{K}$), and value ($\mathbf{V}$), and is described as follows :
 \begin{linenomath*}
 \begin{equation*}\label{selfatt1}
 	\mathbf{Q}=\mathbf{XW}_{\mathbf{Q}}, \ \mathbf{K}=\mathbf{XW}_{\mathbf{K}}, \ \mathbf{V}=\mathbf{XW}_{\mathbf{V}},
 \end{equation*}
 \begin{equation}
 	\label{eq:transformer}\mathrm{Attention}\left( \mathbf{X} \right) =\mathrm{softmax} \left( \frac{\mathbf{QK}^T}{\sqrt{\mathrm{d}_{K}}} \right) \mathbf{V}.
 \end{equation}
 \end{linenomath*}
Over graphs, we focus on computing node instead of token embeddings (recall Figure~\ref{fig:architecture}).
Technically, transformers can be considered as message-passing GNNs operating on fully-connected computational graphs, decoupled from the input graphs. 
%, % and, as vanilla transformersNote that it %self-attention 
%and the basic model is equivariant to node permutations. %, % of the input nodes, 
 %for nodes with the same features, vanilla transformers generate the same representations, regardless of their connections in the graph. 
%  The graph structure 
%  is usually injected through additions to the original features, %with some information about the locations of nodes via 
% via so-called positional encodings (PEs).
The main research question in the context of graphs is how to inject the structural bias of the given input graphs % (i.e., which nodes are actually connected) into the transformer architecture 
by adapting their attention and by adding extensions to the original features, 
via so-called positional encodings (PEs).
 \textbf{Graph Transformer} \citep{dwivedi2020generalization} represents an early work using Laplacian eigenvectors as positional encodings, and various extensions and other models have been proposed since then \citep{min2022transformer}. % cite Transformer for Graphs: An Overview from Architecture Perspective
For instance, \citep{mialon2021graphit} proposed a relative PE \citep{shaw2018self} by means of kernels on graphs to bias the self-attention calculation.  
% [TODO how]
%Several promising proposals have been made to encode undirected molecular graphs \citep{graphormer,1-other} and recent works take into account the scalability challenge \citep{lrgb}, also over larger network graphs \citep{graphgps,1-other}.
%In light of the tremendous success of Transformers in natural language processing (NLP) \citep{vaswani2017attention}, it is reasonable to investigate their applicability in the graph domain. Particularly, they are anticipated to ease the difficulties of over-smoothing and over-squashing in message passing GNNs. Graph Transformer \citep{dwivedi2020generalization} provided an early illustration of how to generalize the Transformer architecture to graphs, using Laplacian eigenvectors as an positional encoding (PE). 
%Building on this, SAN \citep{kreuzer2021rethinking} also uses the Laplacian’s eigenvectors for the PE and computing attention on the full graph. TODO...
 % The Transformer itself is composed of two main parts: a self-attention module and a feed-forward neural network. In the self-attention module, the input node features $\mathbf{X}$ are projected by three matrices $\mathrm{W}_\mathbf{Q}\in \mathbb{R} ^{d\times d_K}$, $\mathrm{W}_\mathbf{K}\in \mathbb{R} ^{d\times d_K}$ and $\mathrm{W}_\mathbf{V}\in \mathbb{R} ^{d\times d_K}$ to the corresponding representations query($\mathbf{Q}$), key($\mathbf{K}$) and value ($\mathbf{V}$). We can compute the self-attention via
Notably, \textbf{GraphTrans} \citep{wu2021representing} was the first hybrid architecture, using a stack of message-passing GNN layers before the regular transformer layers.  
% [TODO ]
Finally, \citep{chen2022structure} have reformulated the self-attention mechanism as a kernel smoother as shown below and incorporated structure information into their \textbf{structure-aware transformer (SAT)} architecture by extracting a subgraph representation rooted at each node before attention computation: %ing the attention:
% [TODO what do they do with the subgraph]
 \begin{linenomath*}
\begin{equation}\label{selfatt3}
	\mathrm{Attention}\left( x_v \right) =\sum_{u\in V}{\frac{\kappa \left( x_v,x_u \right)}{\sum_{w\in V}{\kappa \left( x_v,x_w \right)}}}f\left( x_u \right) ,\forall v\in V,
\end{equation}
 \end{linenomath*}
with $f(x)=\mathbf{W}_{\mathbf{V}} x$, and  non-symmetric exp. kernel~$\kappa$:
 \begin{linenomath*}
\begin{equation*}
\begin{aligned}\label{selfatt4}
	\kappa \left( x,x^{\prime} \right) &=\exp \left( \frac{\left. \langle \varphi(x)\mathbf{W}_{\mathbf{Q}},\varphi(x^{\prime})\mathbf{W}_{\mathbf{K}} \right. \rangle}{\sqrt{d_{K\,\,}}} \right), &&
 \varphi(x) = \begin{cases}
x & \text{in vanilla transformer} \\
\text{GNN}_G(x) & \text{in SAT} \\
\end{cases}
\end{aligned}
\end{equation*}
 \end{linenomath*}
where $\langle\cdot, \cdot\rangle$ is the dot product on $\mathbb{R}^d$ and $\text{GNN}_G(x)$ is % a GNN extractor such as 
an arbitrary GNN model. Most of the aforementioned works focus on the classification of smaller graphs, such as molecules; yet, recently, \textbf{GraphGPS} \citep{rampavsek2022recipe} is also considering larger graphs and the area is focusing on the development of scalable models; for instance,
\textbf{Nodeformer} \citep{wu2022nodeformer} is designed to address the issue of scalability and expressivity in node classification.
Altogether, the transformer architecture opens new and promising avenues for graph representation learning, beyond message-passing GNNs. 
% However, to the best of our knowledge, transformers have not been studied particularly in the context of DAGs. 
%In fact, works on graph transformers have dropped their graph-specific PEs in their DAG experiments \citep{chen2022structure, rampavsek2022recipe}, seemingly because they do not provide the right bias in the context of this specific type of graph.

\textbf{Neural Networks over DAGs. }%
%\label{ssec:gtf}
The additional strong inductive bias present in DAGs has motivated researchers to formulate special neural architectures tailored to this kind of graphs \citep{sperduti1997supervised,frasconi1998general}. Various types of models have been proposed over the years in many different application areas. There are works in the context of syntactic graphs, \citep{tai2015improved,shuai2016dag} logical formulas \citep{crouse2019improving}, source code representations \citep{allamanis2018survey} and  neural architectures \citep{zhang2018graph,zhang2019d,knyazev2021parameter}. We particularly compare to the most recent proposals \textbf{S-VAE} and \textbf{D-VAE} \citep{zhang2019d}; and to \textbf{DAGNN} from \citep{thost2021directed}, who also showed that \emph{using attention for neighbor node aggregation is beneficial in several DAG tasks}.

While the models\shil{Which models?} can be formulated in terms of the message-passing framework \citep{thost2021directed}, their processing is special for GNNs: they compute a graph embedding by iterating over the DAG nodes \emph{in an asynchronous way}, given by the partial order, and starting from the source nodes (or the other way around, from the sink nodes). That is, the messages are passed along the partial order of the DAG and, at any point during computation, capture the information of the subgraph consisting of all the predecessors of the node currently under consideration. Thereafter, a final graph representations can be obtained by aggregating the embeddings of all sink nodes.
In contrast to regular message-passing GNNs, these customized works for DAGs usually focus on encoding graphs (i.e., instead of nodes or edges, which are other common GNN targets) and pass messages over the entire graph, while regular GNNs consider a fixed radius - usually rather small - around each node and essentially obtain a neighborhood embedding. Furthermore, the nature of the proposed architectures shows that \emph{recurrent techniques} have proven especially useful, \emph{analogy to learning over sequences}.
In summary, these DAG models are elegant and effective, yielding state-of-the-art results, but their asynchronous nature leads to very slow and practically inhibitive performance compared to regular~GNNs.

\textbf{Transformers over DAGs. } We are aware of only few proposals for transformers tailored to DAGs. %, and all are very recent. 
\citep{huang2022learning}  %https://proceedings.mlr.press/v162/huang22m/huang22m.pdf 
developed a Directed Acyclic Transfomer % (DATransformer)
in the context of machine translation in order to simultaneously capture multiple translations within the decoded DAG, but the model's encoder is a vanilla
transformer.  %and facilitates fast predictions in a non-autoregressive fashion 
\citep{kotnis2021answering} proposed BIQE, which leverages the depth of nodes in DAG paths as positional encodings, with a specific focus on answering complex queries in knowledge graphs.
\citep{gagrani2022neural} proposed Topoformer, which is also an encoder-decoder model but has been developed for finding optimal topological orders in DAGs. Since the study entirely focuses on the latter goal (e.g., in terms of training objective and evaluation) it is very specific and different from our more general study. The DAG encoder itself is also rather different in that it uses a Laplacian PE, as it is used with regular graph transformers; and a more complex attention mechanism, which does not only model the reachability but also several other relations over the graph.
Closest to our method is \textbf{PACE} \citep{dong2022pace}, which similarly focuses on modeling the sequential nature of DAGs inside a transformer model and independently of a specific application in mind. However, (1) it applies a rather complex, node-specific positional encoding, whereas our PE only distinguishes node depth; (2) the attention is based on the directed transitive closure, while we use reachability and show that this provides better quality; and (3) it's implemented using regular transformers, with runtime complexity $O(|V|^2d)$, %maps DAGs to sequences and uses regular masked transformers, despite the runtime complexity being $O(|V|^2d)$, but 
while we propose a much more efficient and scalable implementation based on message passing. Altogether, \emph{our proposal is simpler and considerably more efficient. In addition, we do not propose a single model but a framework which can be flexibly applied on top of existing graph transformers and thus complement any custom, graph-specific transformer and adapt it to DAGs}.

\section{Transformers for Directed Acyclic Graphs}
\label{sec:DAGformer}

Transformers have revolutionized deep learning, in particular sequence learning, and yield promising performance over graphs. Furthermore, we posit that their benefits actually match well the above-mentioned shortcomings of DAG neural networks. Therefore we developed a graph transformer framework tailored to DAGs. See Figure~\ref{fig:architecture} for an overview.
% In this section, we describe the details of our architecture, DAGformer. The key idea is to introduce DAGs into the attention calculation in Transformer layers. In this way, DAGformer can efficiently leverage self-attention, to achieve the effect that each node is aware of the nodes that are truly relevant to it.
% Note that our architecture can be used with different transformer-based models.

\subsection{Attention based on DAG Reachability}
\label{ssec:DAG-attention}

In contrast to regular graphs, the partial order in DAGs creates particular relations between connected nodes, in the sense that a given node's predecessors and successors are most likely more important to it than other graph nodes. Note that this intuition is also captured in the processing of DAG neural networks. Hence the reachability relation suggests itself to be exploited in our architecture. We apply it to restrict the receptive field of nodes during attention to their predecessors and successors in the graph. \citep{vaswani2017attention} already mentioned the possibility to use restricted attention in order to reduce complexity and, indeed, our proposal does not only yield an architecture which is biased towards the DAG structure, but additionally a considerably more efficient model.

While graphs to classify are usually of manageable size, graph learning in general may face much larger ones. For this reason, we formulate our model in a more general way, based on a bounded reachability relation, representing the receptive field of each node:
$N_k(v) = \{(u,v) \in\ \leqslant_k\} \cup \{(v,u) \in \ \leqslant_k\}$.
% The \emph{$k$-step predecessors and successors} of a node indicate the . We propose that the closely related information of a node $v$ is captured by the reachability in DAGs, and thus is contained in its
% Based on this, we introduce reachability relation attention as follows.
% \\
% \\
% \textbf{Reachability relation attention.} In this form of DAG attention, we aggregate $\infty$-hop neighborhood information at node $v$.
% For this, Equation (\ref{selfatt3}) is modified to
% \begin{equation}\label{selfatt7}
% \mathrm{Attn}\left( x_v \right) =\sum_{u\in N_\infty(v)}{\frac{\kappa \left( x_v,x_u \right)}{\sum_{w\in N_\infty(v)}{\kappa \left( x_v,x_w \right)}}}f\left( x_u \right) ,\forall v\in V.
% \end{equation}
% We call this module complete DAG attention.
% \textbf{Reachability relation attention.}
%
We adapt Equation \eqref{selfatt3}
for our \textbf{reachability-based attention~(DAGRA)}:
%This form of attention aggregates information from $N_k(v)$ at node $v$.
 % accordingly:
 \begin{linenomath*}
\begin{equation*}\label{selfatt8}
\mathrm{Attention_{DAG}}\left( x_v \right) =\sum_{u\in N_k(v)}{\frac{\kappa \left( x_v,x_u \right)}{\sum_{w\in N_k(v)}{\kappa \left( x_v,x_w \right)}}}f\left( x_u \right) ,\forall v\in V.
\end{equation*}
\end{linenomath*}
The number $k$ represents a hyperparameter and can be chosen according to the data. In our ablation study (see Section~\ref{sec:experiment}), $k=\infty$ yielded best performance consistently. Observe that this choice of $k=\infty$ is still very different from both regular GNNs, which usually considerably restrict $k$, and regular transformers (Eq.~\ref{eq:transformer}), whose receptive field is not restricted at all (i.e., in terms of reachability).
%We call this module reachability relation attention. In this form, each node $v$ only attends to $N_k(v)$. Generally, we need to enable each node to attend to all predecessors and successors, in which case $k=\infty$. We will investigate the performance impact of different $k$ in the experiments part.

% In practice, we also find that a small value of $k$ already achieves good performance, while not suffering from over-smoothing or over-squashing.

% Fig. 1 shows an illustration of DAG attention.

% So we naturally could incorporate it into the self-attention layer. To materialize this idea, we formally define the set of reachable relations of node $v$ as $R_v=\{u|u\leqslant v\,\,or\,\,v\leqslant u\}$. And we aggregate information only from $R_v$ at every node $v$.

\subsection{Positional Encodings based on DAG Depth}
\label{ssec:PE}
As outlined in Section~\ref{sec:related}, positional encodings have been recognized as important and proven effective for incorporating graph structure into transformers. Observe that the sequential nature of DAGs makes them possess special position information, the depth of a node within the DAG. We propose to include this knowledge in the form of \textbf{directed acyclic graph positional encodings~(DAGPE)} as follows, exactly as suggested for the original transformer architecture \citep{vaswani2017attention}:
%are intended to convey the position in space of a specific node inside the graph. It refers to the process of concatenating or adding the positional representations of the graph to the input node features before the main Transformer model, such as the Laplacian positional encoding \citep{dwivedi2020generalization} or random walk positional encoding \citep{dwivedi2021graph}. For some DAGs, the depth information has a very important meaning and is valuable for their positional encodings. Therefore, we propose a simpler positional encoding called directed acyclic graph positional encodings (DAGPE) that can be computed in linear time. Formally, DAGPE are defined as:
\begin{linenomath*}
\begin{equation*}
\begin{aligned}
PE_{( v,2i )}&=\sin\Big(\frac{\mathit{depth}(v)}{10000^{\frac{2i}{d}}}\Big), &&
PE_{( v,2i+1)}=\cos\Big(\frac{\mathit{depth}(v)}{10000^{\frac{2i}{d}}}\Big),
\end{aligned}
\end{equation*}
\end{linenomath*}
where $d$ is the node feature dimension and $i$ the index of the dimension under consideration.
% $d_{model}$ is the dimension of the input features embeddings

\textbf{DAG Attention.}
% inspired by Tsai et al.\citep{tsai2019transformer},
We obtain the following model, combining DAGRA and DAGPE, for $v\in V$:
% In NLP, it is common to bias Transformers towards sequential inputs by adding positional encodings to the token embeddings. In our model, we adopt the formulation of \citep{vaswani2017attention}. As shown in Figure \ref{1}, by incorporating DAGPE, we introduce \textbf{DAG attention}:
\begin{linenomath*}
\begin{equation}
\begin{aligned}\label{dagattn}
&\mathrm{Attention}\left( x_v \right) &=
\sum_{u\in N_k(v)}{\frac{\kappa \left( x_v+PE_v,x_u+PE_u \right)}{\sum\limits_{w\in N_k(v)}{\kappa \left( x_v+PE_v,x_w+PE_w \right)}}}f\left( x_u \right).
% \cdot 
% &\exp \!\:\left( \frac{\left< PE_x\mathbf{W}_{\mathbf{Q}_{\mathbf{PE}}},PE_{x^{\prime}}\mathbf{W}_{\mathbf{K}_{\mathbf{PE}}} \right>}{\sqrt{d_{K\mathrm{ }}}} \right).
\end{aligned}
\end{equation}
\end{linenomath*}
Our attention incorporates both similarity of node features and of node depths. We argue that these are the most critical aspects of DAGs and our evaluation will show their impact and general effectiveness. In particular, note that most kinds of DAG data (e.g., citation networks) do not require us distinguishing between predecessor or successor nodes of same depth.
%The new form attention considers the direct sum of the first space measuring similarity between nodes attributes and the second space measuring similarity between nodes depth. It can bias reachability relation attention based on the depth of nodes. In this way, DAG attention will not only focus on the node attributes but also on the topological positions when deciding the weights.

\subsection{Expressive Power of DAG Attention}
Technically, we restrict the attention to reachable nodes and, in this way, obtain considerable efficiency gains. Yet, our architecture is tailored to the special DAG structure, and we can show that this design offers similar expressivity to regular transformers.

It is important to note that in our framework, all nodes directly communicate with at least one source node (i.e., node without predecessors) by the DAG structure. This is specifically the case because we do not restrict the radius~$k$ of the receptive field, but consider $k=\infty$. 
Hence, 2 layers are always enough to establish communication beyond any two nodes that have a common source node. Observe that this is often the case in practice; especially in DAG classification, many kinds of DAGs contain only a single source node (e.g., ogbg-code2 \cite{hu2020open} and NA \cite{zhang2019d}).

For DAGs with $m$ source nodes we need $2m$ layers for full communication, if we assume the DAG to be connected. In the latter case, every pair of source nodes has a common successor through which communication can happen. Further, connectedness is a reasonable assumption, otherwise communication is likely not needed in most scenarios. 

% While we technically restrict the attention to reachable nodes, we can show that this design offers similar expressivity to regular transformers. Specifically, each node directly communicates with at least one source node by the DAG structure. Hence, 2 transformer layers are always enough to establish communication beyond two nodes that have a common source node. We can generalize this result.
% %Furthermore, especially DAG classification data usually contain DAGs with a single source node (e.g., ogbg-code2 and NA).
% %For DAGs with 
%  % source nodes we need 
%  % layers for full communication, if we assume the DAG to be connected. In the latter case, every pair of source nodes has a common successor through which communication can happen. Further, connectedness is a reasonable assumption, otherwise communication is likely not needed in most scenarios. Our empirical results demonstrate that our design likely does not limit expressivity on many, also large datasets.
% Hence, 
% % Furthe, the source nodes may seem to represent a certain kind of bottlenecks since, essentially, 
% our architecture's bias emphasizes DAG relationships while re-directing the remaining relationships in the regular transformer's full attention matrix.

\subsection{Theoretical Intuition}

We have shown that our architecture’s bias emphasizes DAG relationships while
re-directing the remaining relationships in the regular transformer’s full attention matrix though the source nodes. This can
also be shown to be in line with random walk theory, we draw a connection to PageRank \citep{brin1998pagerank, gasteigerpredict}. Specifically, we consider a \emph{PageRank variant that takes the root node into account - personalized PageRank.} 
%olve the problem by recognizing the random walk’s limit distribution. 
We define the root node $x$ via the teleport vector $i_x$, which is a one-hot indicator vector. The personalized PageRank can be obtained for node $x$ using the recurrent equation 
$\pi_{G}(i_x) = (1-\alpha)A_{rw}\pi_{G}(i_x)+\alpha i_x$, with teleport (or restart) probability $\alpha\in (0,1]$. Solving this equation, we obtain:
$\pi_{G}(i_x)=\alpha (I_n-(1-\alpha)A_{rw})^{-1}i_x$, where $A_{rw}=AD^{-1}$, with $A$ and $D$ being the adjacency and the degree matrix, respectively \citep{gasteigerpredict}. We invert the directions of the edges in $G$ to create a reverse DAG $\tilde{G}$.
We can show that, for every node $x$, only nodes $y$ not reachable from $x$ (i.e., $y \notin N_\infty(x)$) will satisfy that $\pi_{G}(i_x)[y]+\pi_{\tilde{G}}(i_x)[y]$
(the y-th element of $\pi_{\tilde{G}}(i_x)$) equals 0. The proof is provided in Appendix A.
This means that for the random walk's limit distribution, 
% only 
the probability of nodes that are not reachable from $x$ is 0.

\subsection{Implementation}
\label{ssec:Implementation}
We describe two ways of implementing our model, especially DAGRA, based upon transformers and message-passing GNNs, respectively.

\textbf{Masked Attention for Transformers.} As shown in Figure~\ref{1}, we can implement DAG attention in a very straightforward fashion using a mask that masks out node pairs based on the DAG reachability relation as follows (compare to 
Equation~\eqref{eq:transformer}), with the attention mask $\mathbf{M}$ being defined as a symmetric matrix over node pairs $(v, u)\in V \times V$.
\begin{linenomath*}
\begin{equation*}\label{mask1}
\mathrm{Attention}\left( \mathbf{X} \right) =\mathrm{softmax} \left( \frac{\mathbf{QK}^T}{\sqrt{\mathrm{d}_{K}}} + \mathbf{M} \right) \mathbf{V}, %\quad
\quad \mathbf{M}(v,u) = \begin{cases}
0 & \text{if $u\in N_k(v) $ } \\
-\infty & \text{otherwise}. \\
\end{cases}
\end{equation*}
\end{linenomath*}
% with the attention mask $\mathbf{M}$ being defined as a symmetric matrix over node pairs $(v, u)\in V \times V$ as:
% \begin{linenomath*}
% \begin{equation*}\label{mask2}
% \mathbf{M}(v,u) = \begin{cases}
% 0 & \text{if $u\in N_k(v) $ } \\
% -\infty & \text{otherwise}. \\
% \end{cases}
% \end{equation*}
% \end{linenomath*}
While this masking represents a simple technique to extend and bias existing transformer models to DAGs, the resulting architecture does not benefit from the restricted node set to be considered, leading to unnecessary, costly matrix operations during attention calculation. Moreover, it consumes
%In this way, we only allow the node $v$ to attend on $N_k(v)$ because we mask other nodes. One advantage of this implementation is that it can be easily used in various graph transformers, since we only need to compute a mask and add it to the attention calculation. However, this implementation needs to consume
 $O(|V|^2)$ of additional memory to store $\mathbf{M}$. Thus, the runtime complexity per layer is still $O(|V|^2·d)$ for the vanilla transformer - and may be even higher, depending on the underlying transformer.

\textbf{DAG Attention using Message Passing.} Based on the formulation in Equation~\eqref{dagattn}, we propose to follow the message-passing scheme; % to implement DAG attention; 
that is, for a node $v$, we compute $N_k(v)$ and only aggregate messages from the nodes in that set to compute the DAG attention. For the latter aggregation, we can use readily available frameworks, such as PyG \citep{fey2019fast}. We analyze the complexity of this proposal. %of this implementation in the following.
% for example, on top of the PyG framework \citep{fey2019fast}. That is, for a node $v$, we can only aggregate message from $N_k(v)$ to calculate attention by \texttt{torch\_geometric.nn.MessagePassing}. We analyze the complexity of this implement in the following.

\subsection{Computational Complexity}
\label{ssec:Complexity}
Clearly, the time complexity of the proposed model is lower than that of the standard transformer. We consider two computation steps: % of the proposed method consist of two parts:

\textbf{Computing DAG Reachability.} To obtain $N_k$, we compute the transitive closure of $E$ for each node in the graph
% In this phase, we extract the $k$-step predecessors and successors of all nodes in the DAG $G$. For each node, its $k$-step predecessors and successors can be extracted in the transitive closure of $G$. And the transitive closure may be
using a breadth-first search starting at the node and iteratively expanding $N_k$ based on $E$. Hence, the overall complexity of this step is $O(|E||V|)$.
Observe that, during training, we can consider this step as pre-processing since it only has to be run once, in the very beginning.

\textbf{DAG Attention.} The matrix product $\mathbf{Q}\mathbf{K}^T$ of self-attention has a cost $O(|V|^2·d)$ which is quadratic in the number of nodes in $G$, and hence especially critical for large graphs. %However, to compute the representation of node $v$ in 
Yet, for our DAG attention, we only aggregate messages from reachable nodes. Therefore, the time complexity of reachability relation attention is $O(|V|\times n_k\times d)$, %and the time complexity when $k=\infty$ is $O(|V|\times N_\infty \times d)$, 
where $n_k$ is the average size of $N_k$. %-step reachability relation sets, and $N_\infty$ is the average size of $\infty$-step reachability relation sets.
In the worst case, we have $n_k=|V|-1$, but we assume that $n_k<<|V|$ in general. Especially for sparse graphs, the complexity gets significantly lower, i.e., 
$O(|V|\times \overline{deg}^k \times d)$,
% for $k$-step reachability relation set and  $O(|V|\times \overline{deg}^{\ell} \times d)$ for $\infty$-step reachability relation set 
where $\overline{deg}$ is the average node degree. % of the nodes. %, and $\ell$ is depth of $G$. 

Finally, observe that directed rooted trees represent a special kind of DAGs broadly seen across domains, such as abstract syntax trees of source code \citep{allamanis2018survey}. For them, the runtime of DAG attention scales almost linearly, as illustrated in the following theorem.

\begin{theorem} \label{th1}
    In a directed rooted tree $T$, the runtime of DAG attention is $O(|V|\times k\times \Delta^+ \times d)$, where $\Delta^+$ is the maximal outdegree of $T$. When $k=\infty$, the runtime of DAG attention is $O(|V|\times \mathit{depth}(T) \times \Delta^+ \times d)$. %, where $\ell$ is the depth of $T$.

\end{theorem}
The proof is provided in Appendix A. Note that when $\Delta^+$ is small, $d$ is a constant and $\mathit{depth}(T)$ is $O(\log |V|)$, the runtime of DAG attention is almost linear in $|V|$. Indeed, we observed this on the ogbg-code2 dataset in our experiments.
\section{Evaluation}
\label{sec:experiment}

We evaluate our proposed architecture on several datasets, comparing it to competitive baselines. Ablation studies also provide more insight into the composition of our model. Primarily, the following questions are investigated:
%versus several SOTA methods for graph representation learning including GNNs and %Transformers on three DAGs prediction tasks, and study the various components of our architecture to understand what drives the performance. In conclusion, the following aspects about DAGformer were identified:
\begin{itemize}[leftmargin=*,topsep=0pt,noitemsep]%[leftmargin=*] % ,noitemsep,topsep=0pt
	\item Does \textbf{DAG attention} have the expected effects and \textbf{improves existing graph transformers} both in terms of quality and efficiency? 
	\item Is DAG bias encoded through both \textbf{DAGRA \& DAGPE}, and how does DAGPE compare to others?
	\item How does the \textbf{size of the receptive field} - the main difference between regular GNNs and transformers - affect the performance?
\end{itemize}
%\begin{enumerate}
%    \item The DAGformer architecture can be used with various transformer-based models, and outperforms these benchmarks on graph classification tasks, demonstrating the framework's generality and efficacy.
%    \item We evaluate the size of $N_\infty$ on three datasets, and show they are much less than $|V|$. We also investigate the impact of $k$-step reachability relation set of different $k$.
%    \item We perform ablation studies and confirm the impact of DAGformer's modules.
%    % show that DAGPE improves performance in the dataset whose depth information has an essential meaning.
%\end{enumerate}
% And DAGformer outperforms these benchmarks, demonstrating the framework's generality and efficacy.

\begin{table}
	\caption{Statistics of the datasets we used.}
	\centering
	\begin{tabular}{ccccccc}
		\toprule
		% Dataset & \# graphs & Avg \# nodes & Avg $n_\infty$ \\
		% \midrule %
		% ogbg-code2 & 452,741 &125.2  & 9.78 \\
		% NA & 19,020 &8.0 & 7.00 \\
		% Self-citation  & 1,000 &59.1 & 4.30 \\
  %           Cora & 1 &2708 & \\
  %           Citeseer & 1 &3327 & \\
  %           Pubmed & 1 &19717 & \\
            & ogbg-code2 & NA & Self-citation &  Cora &Citeseer &Pubmed \\
            \midrule %\\
             \# graphs & 452,741 & 19,020 & 1,000 & 1 & 1 & 1\\
             Avg \# nodes & 125.2 & 8.0 & 59.1 & 2,708 & 3,327 & 19717\\ 
             Avg $n_\infty$ & 9.78 & 7.00 & 4.30 & 20.88 & 5.33 & 60.56\\
		\bottomrule
	\end{tabular}
	\label{tab:tab4}\label{tab:datasets}
\end{table}

\subsection{Experiment Setting}
\label{ssec:Datasets}

\textbf{Datasets.} Table ~\ref{tab:datasets} shows the diversity of the datasets we used; see Appendix B for full details.
\begin{itemize}[leftmargin=*] % ,noitemsep,topsep=0pt
	\item \textbf{ogbg-code2} \citep{hu2020open}\textbf{.} A large dataset of ASTs %abstract syntax trees %(ASTs)
derived from Python methods. %It is a large dataset, which has 452741 graphs each with
%On average, it has an outdegree $\Delta^+=7.66$ and a depth of $8.78$. 
The node features are syntax %ctic 
tokens\shil{The graph nodes are tokens? How does it relate to features? btw: which kind of tokens? syntax tokens?}. The multi-task classification task is to predict the first 5 %sub-
tokens of the %original 
function~name. %, evaluated with the metric of F1 score. We adopt the split of the benchmark.
% We compare our method to the following GNNs: GCN \citep{kipf2017semisupervised}, GIN \citep{xu2018powerful}, GAT \citep{velivckovic2018graph}, PNA \citep{corso2020principal} and DAGNN \citep{thost2021directed}, and the Transformer baselines: the vanilla Transformer, GraphTrans \citep{wu2021representing} and SAT \citep{chen2022structure} (SOTA). All baseline performances are as reported on the OGB leaderboard.\\

	\item \textbf{NA} \citep{zhang2019d}\textbf{.} A dataset with much smaller graphs, containing neural architecture DAGs generated by the ENAS software. Each node's features represent a certain neural network component type. The (regression) task is to predict the architecture performance on CIFAR-10. % under the weight-sharing scheme.
% Since this is a regression task, the evaluation metrics are RMSE and Pearson's r. We use the splits from \citep{thost2021directed}.
%We compare to those methods reported on \citep{thost2021directed}: DAGNN \citep{thost2021directed}, D-VAE \citep{zhang2019d} , S-VAE \citep{bowman2016generating}, the vanilla Transformer, Graph Transformer \citep{dwivedi2020generalization} and SAT \citep{chen2022structure}.\\

	\item\textbf{Self-citation} \citep{ARC,luo2023impact}\textbf{.} Each DAG in the academic self-citation represents a scholar's academic self-citation network % on the natural language processing field extracted from the ACL Anthology Reference Corpus 
 \citep{ARC}. %Our established dataset contains 1000 academic self-citation networks from scholars ranked top-1000 by number of papers. In a self-citation networks
%More precisely, each node represents a paper of this scholar and each directed edge indicates a citation. 
Each paper has two node attributes: the publication year and total citation count (excluding the papers whose citation category is to be inferred). Here we consider the node-level task of predicting whether a paper is highly-cited or not - as a proxy for its impact.
%for which the exact number of citations is missing. Note that we consider the paper's number of citations as a proxy for its impact. %  whether a paper of this scholar is highly-cited or not. Predicting citation counts allows us to better understand the relationship between a paper and its impact. So the node-level task is to predict whether a paper of this scholar is highly-cited or not. Specifically, for each scholar self-citation network, we randomly select 10\% of its papers and drop their citation counts. Then we would like the model to predict which papers are highly-cited.
% We split the graphs randomly into 80/10/10\% train/valid/test sets.  As the data class is highly unbalanced (only 27.2\% of data is positive), we use Average Precision (AP) and ROC-AUC as evaluation metrics.
%We compare our method to following well-known baselines and SOTA architecture for DAGs: GCN \citep{kipf2017semisupervised}, GIN \citep{xu2018powerful}, GAT \citep{velivckovic2018graph}, PNA \citep{corso2020principal}, DAGNN \citep{thost2021directed}, the vanilla Transformer, Graph Transformer \citep{dwivedi2020generalization} and SAT \citep{chen2022structure}.

    \item \textbf{Cora, Citeseer, Pubmed} \citep{sen2008collective}\textbf{.}
    Established, medium-sized citation graphs. Only for our method, we removed a small number of cyclic citation links to make them DAGs.
    
    % , which are among the most widely used benchmarks for node classification. Only for our method, we removed a small number of cyclic citation links to make them DAGs. We use random splits with train/valid/test ratios as 50/25/25\% and classification accuracy as evaluation metric from \citep{wu2022nodeformer}.
\end{itemize}

\textbf{Baselines.} We chose two basic message-passing GNNs, \textbf{GCN} and \textbf{GIN}; extensions of these models using a virtual node connected to all other graph nodes; the graph attention network \textbf{GAT}, as it showed especially good performance in \citep{thost2021directed}; \textbf{MixHop} \citep{abu2019mixhop}, \textbf{LDS-GNN} \citep{franceschi2019learning}, \textbf{IDGL} \citep{chen2020iterative} and \textbf{PNA}, as more recent % 2019 is not the latest!the latest 
GNN proposals. In terms of transformer models, we considered vanilla \textbf{Transformer (TF)}, \textbf{Graph Transformer (GT)}, \textbf{GraphTrans}, \textbf{SAT}, \textbf{GraphGPS} and \textbf{NodeFormer} which achieved state-of-the-art performance (SOTA). Lastly, we consider neural networks tailored to DAGs: \textbf{S-VAE}, \textbf{D-VAE}, \textbf{DAGNN} and the current SOTA, \textbf{PACE}. For more detailed descriptions, see Section~\ref{sec:related}.

\textbf{DAG+ Models.} 
We implemented our DAG attention on top of vanilla Transformer, GraphTrans, SAT, GraphGPS and NodeFormer only (1)~modifying their self-attention module by restricting attention in terms of reachability and (2)~using DAGPE instead of the original one.
 We note that there are various alternatives for the latter (e.g., concatenation etc.), therefore we opted for the most simple solution which, at the same time, provides a very direct comparison. 
 Moreover, on ogbg-code2 we did not use any PE since \citep{chen2022structure} showed that they do not make real impact and we observed the same in preliminary experiments.  
For fair comparisons, we use the same hyperparameters (including the number of layers, batch size, hidden dimension etc.) and readout as baseline transformers.
Given one of the baseline transformers M, we denote the modified model using DAG attention by \textbf{DAG+M}. Unless explicitly specified otherwise, we chose $k=\infty$ in all experiments. Full details on the experimental setup and hyperparameters are provided in the Appendix B.
%Recall that Graph Transformer is a vanilla transformer extended with a Laplace PE.

%\textbf{Training.}

\begin{table}[t]
        \begin{minipage}[t]{0.52\linewidth}
        \centering
        \scriptsize%
	\caption{\textbf{Code graph classification} on ogbg-code2. The baseline results were taken 
 from the OGB leaderboard.} %\protect\citep{hu2020open}.}
        \begin{tabular}{cccc}
        \toprule
        Model & Valid F1 (\%) & Test F1 (\%) & Time(epoch)\\
        \midrule
        GIN & 13.7\tiny{ $\pm$ 0.2}  & 14.9\tiny{ $\pm$ 0.2} &181s \\
        GCN & 14.0\tiny{ $\pm$ 0.2} & 15.1\tiny{ $\pm$ 0.2} &127s\\
        GIN-Virtual & 14.4\tiny{ $\pm$ 0.3} & 15.8\tiny{ $\pm$ 0.2} &155s \\
        GCN-Virtual & 14.6\tiny{ $\pm$ 0.1} & 16.0\tiny{ $\pm$ 0.2} &198s\\
        GAT & 14.4\tiny{ $\pm$ 0.2}  & 15.7\tiny{ $\pm$ 0.2} &134s\\
        PNA & 14.5\tiny{ $\pm$ 0.3} & 15.7\tiny{ $\pm$ 0.3} &427s\\
        DAGNN & 16.1\tiny{ $\pm$ 0.4} & 17.5\tiny{ $\pm$ 0.5} & 6018s\\
        PACE & 16.3\tiny{ $\pm$ 0.3} & 17.8\tiny{ $\pm$ 0.2} & 2410s\\
        \midrule %
        Transformer& 15.5\tiny{ $\pm$ 0.2} & 16.7\tiny{ $\pm$ 0.2} &1817s\\
        \textbf{DAG+Transformer}& \textbf{17.4}\tiny{ $\pm$ 0.1} & \textbf{18.8}\tiny{ $\pm$ 0.2} & 591s \\
        \midrule %
        GraphTrans & 16.6\tiny{ $\pm$ 0.1} & 18.3\tiny{ $\pm$ 0.2} &1117s\\
        \textbf{DAG+GraphTrans}& \textbf{17.0}\tiny{ $\pm$ 0.2} & \textbf{18.7}\tiny{ $\pm$ 0.2} & 526s \\
        \midrule %
        GraphGPS & 17.4\tiny{ $\pm$ 0.2}  & 18.9\tiny{ $\pm$ 0.2} &1919s \\
        \textbf{DAG+GraphGPS} & \textbf{17.6}\tiny{ $\pm$ 0.1}  & \textbf{19.3}\tiny{ $\pm$ 0.2} & 608s \\
        \midrule %
        SAT (SOTA) & 17.7\tiny{ $\pm$ 0.2} & 19.4\tiny{ $\pm$ 0.3} &2437s\\
        \textbf{DAG+SAT} & \textbf{18.5}\tiny{ $\pm$ 0.1} & \textbf{20.2}\tiny{ $\pm$ 0.2} &681s \\
        \bottomrule
        \end{tabular}
	\label{tab:tab1}\label{tab:results-ogb}
	\vskip.3cm
  \end{minipage}
    \hfill
    \begin{minipage}[t]{0.43\linewidth}
    \centering
     \scriptsize%
     \caption{\textbf{Node classification} results for the self-citation dataset; AP (\%) and ROC-AUC (\%).}
    \begin{tabular}{ccc}
        \toprule
        Model & AP $\uparrow$ & ROC-AUC $\uparrow$ \\
        \midrule %
        GIN & 57.7\tiny{ $\pm$ 1.8}  & 79.7\tiny{ $\pm$ 0.2} \\
        GCN & 58.8\tiny{ $\pm$ 0.4}  & 79.9\tiny{ $\pm$ 0.2} \\
        GIN-Virtual & 57.4\tiny{ $\pm$ 1.2} & 79.5\tiny{ $\pm$ 0.4} \\
        GCN-Virtual & 58.9\tiny{ $\pm$ 0.2} & 80.0\tiny{ $\pm$ 0.1} \\
        GAT & 55.3\tiny{ $\pm$ 3.7}  & 77.9\tiny{ $\pm$ 1.4} \\
        PNA & 62.4\tiny{ $\pm$ 0.7} & 81.0\tiny{ $\pm$ 0.4} \\
        DAGNN & 61.2\tiny{ $\pm$ 0.6} & 81.0\tiny{ $\pm$ 0.3} \\
        PACE & 52.1\tiny{ $\pm$ 1.8} & 75.9\tiny{ $\pm$ 0.7} \\
        \midrule %
        Transformer & 56.8\tiny{ $\pm$ 1.8} & 78.7\tiny{ $\pm$ 0.3} \\
        \textbf{DAG+Transformer} & \textbf{63.8}\tiny{ $\pm$ 0.8} & \textbf{82.2}\tiny{ $\pm$ 0.5} \\
        \midrule %
        GraphGPS & 61.6\tiny{ $\pm$ 2.6} & \textbf{81.3}\tiny{ $\pm$ 0.6} \\
        \textbf{DAG+GraphGPS} & \textbf{63.5}\tiny{ $\pm$ 1.2} & 80.8\tiny{ $\pm$ 0.5} \\
        \midrule %
        SAT & 59.8\tiny{ $\pm$ 1.7} & 79.8\tiny{ $\pm$ 0.7} \\
        \textbf{DAG+SAT} & \textbf{62.7}\tiny{ $\pm$ 1.5} & \textbf{80.6}\tiny{ $\pm$ 0.7} \\
        \midrule %
        NodeFormer & 39.6\tiny{ $\pm$ 0.6}  & 69.4\tiny{ $\pm$ 0.3} \\
       \textbf{ DAG+NodeFormer} & \textbf{64.9}\tiny{ $\pm$ 0.8}  & \textbf{81.7}\tiny{ $\pm$ 0.8} \\
        \bottomrule
    \end{tabular}
    \label{tab:tab3}\label{tab:results-citation}
     \end{minipage}
\end{table}

\begin{table}[t]
    \begin{minipage}[t]{0.54\linewidth}
    \centering
     \scriptsize%
     \caption{\textbf{Node classification accuracy} (\%).% over Cora, Citeseer. 
     The baseline results were taken from \citep{wu2022nodeformer}.}
    \begin{tabular}{cccc}
        \toprule
        Model & Cora & Citeseer & Pubmed \\
        \midrule %
        GCN & 87.06\tiny{ $\pm$ 0.34}  & 75.75\tiny{ $\pm$ 0.37} & 88.16\tiny{ $\pm$ 0.14}\\
        GAT & 86.85\tiny{ $\pm$ 0.30}  & 75.92\tiny{ $\pm$ 0.26} & 86.90\tiny{ $\pm$ 0.22}\\
        MixHop & 87.59\tiny{ $\pm$ 0.52}  & 73.64\tiny{ $\pm$ 0.73}  & 89.32\tiny{ $\pm$ 0.25} \\
        IDGL & 87.88\tiny{ $\pm$ 0.34}  & 74.32\tiny{ $\pm$ 0.51} & 89.22\tiny{ $\pm$ 0.14}\\
        LDS-GNN & 87.82\tiny{ $\pm$ 0.62}  & 75.22\tiny{ $\pm$ 0.23} & OOM \\
        PACE & 79.47\tiny{ $\pm$ 0.63}  & 73.65\tiny{ $\pm$ 1.23} & OOM \\
        \midrule %
        Transformer & 75.92\tiny{ $\pm$ 0.86}  & 72.23\tiny{ $\pm$ 1.06} & OOM \\
       \textbf{ DAG+Transformer} & \textbf{87.80}\tiny{ $\pm$ 0.53}  & \textbf{74.42}\tiny{ $\pm$ 0.22} & \textbf{89.01}\tiny{ $\pm$ 0.13} \\
       \midrule %
        SAT & 75.18\tiny{ $\pm$ 0.62}  & 74.88\tiny{ $\pm$ 0.73} & OOM \\
       \textbf{ DAG+SAT} & \textbf{87.48}\tiny{ $\pm$ 0.37}  & \textbf{76.64}\tiny{ $\pm$ 0.26} & \textbf{89.17}\tiny{ $\pm$ 0.15} \\
        \midrule %
        NodeFormer & 88.80\tiny{ $\pm$ 0.26}  & 76.33\tiny{ $\pm$ 0.59} & 89.32\tiny{ $\pm$ 0.25}\\
       \textbf{ DAG+NodeFormer} & \textbf{90.49}\tiny{ $\pm$ 0.17}  & \textbf{78.24}\tiny{ $\pm$ 0.33} & \textbf{89.44}\tiny{ $\pm$ 0.24}\\
        \bottomrule
    \end{tabular}
    \label{tab:tab4}\label{tab:citation}
     \end{minipage}
    \hfill
    \begin{minipage}[t]{0.43\linewidth}
        \centering
        \scriptsize%
	\caption{\textbf{Regression.} Predictive performance of latent representations over NA. }
        \begin{tabular}{ccc}
        \toprule
        Model & RMSE $\downarrow$ & Pearson's r $\uparrow$ \\
        \midrule %
        GCN & 0.482\tiny{ $\pm$ 0.003}  & 0.871\tiny{ $\pm$ 0.001} \\
        S-VAE & 0.521\tiny{ $\pm$ 0.002}  & 0.847\tiny{ $\pm$ 0.001} \\
        D-VAE & 0.375\tiny{ $\pm$ 0.003}  & 0.924\tiny{ $\pm$ 0.001} \\
        DAGNN & 0.264\tiny{ $\pm$ 0.004} & 0.964\tiny{ $\pm$ 0.001} \\
        PACE & 0.254\tiny{ $\pm$ 0.002} & 0.964\tiny{ $\pm$ 0.001} \\
        \midrule %
        Transformer & 0.285\tiny{ $\pm$ 0.004} & 0.957\tiny{ $\pm$ 0.001} \\
        GT & 0.275\tiny{ $\pm$ 0.003} & 0.961\tiny{ $\pm$ 0.001} \\
        \textbf{DAG+Transformer} & \textbf{0.253}\tiny{ $\pm$ 0.002} & \textbf{0.966}\tiny{ $\pm$ 0.001} \\
        \midrule %
        GraphGPS & 0.306\tiny{ $\pm$ 0.004} & 0.950\tiny{ $\pm$ 0.001} \\
        \textbf{DAG+GraphGPS} & \textbf{0.267}\tiny{ $\pm$ 0.005} & \textbf{0.964}\tiny{ $\pm$ 0.001} \\
        \midrule %
        SAT & 0.298\tiny{ $\pm$ 0.003} & 0.952\tiny{ $\pm$ 0.001} \\
        \textbf{DAG+SAT} & \textbf{0.262}\tiny{ $\pm$ 0.004} & \textbf{0.964}\tiny{ $\pm$ 0.001} \\
        \bottomrule
        \end{tabular}
        \label{tab:tab2} \label{tab:results-na}
	\vskip.3cm
  \end{minipage}
\end{table}

\subsection{Results and Discussion}
\label{ssec:Results}

\textbf{Overall Performance, Tables \ref{tab:tab1}, \ref{tab:tab3}, \ref{tab:tab4} and  \ref{tab:tab2}.} 
First, observe that the results are very consistent, although the datasets differ greatly in size, DAG sizes, DAG shape (e.g., in ogbg-code2 we have trees), and nature of data (e.g., node features). The message-passing GNNs represent standardized baselines but do not reach the performance of networks tailored to DAGs, such as DAGNN. Note that the latter is however comparatively bad on the node-level task. This can be explained by its processing. It computes node representations in the order of the partial order, and hence the representations of nodes in the beginning of the order contain only few information about the entire graph. Intuitively, transformers should be able to capture this information, but the transformers we tested do neither perform better, not even the ones tailored to graphs. This shows that they are missing information captured by those message-passing neural networks that were tailored to DAGs.
The results clearly show that our DAG attention is successful in providing good improvements, over the best graph transformer SAT and even better ones over vanilla Transformer. On ogbg-code2, the improvement is smaller for GraphTrans and SAT. However, this benchmark task is heavily dependent on other features (e.g., language understanding) and hence presents a special challenge; in this regard, the NA and self-citation datasets contain ``cleaner'' graphs. It is interesting to see that our framework lifts the transformers from below the level of DAGNN to above, in terms of all metrics, over these two datasets. Lastly, we observe that the PACE transformer tailored to DAGs is similarly outperformed by our simpler, but more effective technique.
Nevertheless, the overall conclusion holds in general: Over all datasets, our DAG attention makes the transformers outperform (1) the original transformers and (2) the neural networks tailored to DAGs. This shows that the DAG-specific bias provided by DAG attention is the right bias. We investigated this in more detail in our ablation experiments.

\begin{table}[t]
	\caption{Ablation results.}
	\centering
         \scriptsize%
	\begin{tabular}{lcccccc}
		\toprule
		\multirow{2}{*}{Ablation} & \multicolumn{2}{c}{ogbg-code2} & \multicolumn{2}{c}{NA} & \multicolumn{2}{c}{Self-citation} \\
		% \cmidrule{2-3,4-5,6-7}
		& Valid F1 & Test F1 & RMSE $\downarrow$ & Pearson's r $\uparrow$ & AP $\uparrow$ & ROC-AUC $\uparrow$\\
		\midrule %
		\textbf{DAG+TF} & 0.1731\tiny{ $\pm$ 0.0014} & 0.1895\tiny{ $\pm$ 0.0014} & \textbf{0.253}\tiny{ $\pm$ 0.002} & \textbf{0.966}\tiny{ $\pm$ 0.001} & \textbf{0.638}\tiny{ $\pm$ 0.008} & \textbf{0.822}\tiny{ $\pm$ 0.005} \\
		(-) DAGRA & 0.1546\tiny{ $\pm$ 0.0018} & 0.1670\tiny{ $\pm$ 0.0015} & 0.284\tiny{ $\pm$ 0.003} & 0.957\tiny{ $\pm$ 0.001} & 0.573\tiny{ $\pm$ 0.011} & 0.790\tiny{ $\pm$ 0.003}\\
		(-) DAGPE & \textbf{0.1739}\tiny{ $\pm$ 0.0013} & \textbf{0.1879}\tiny{ $\pm$ 0.0015} & 0.263\tiny{ $\pm$ 0.002} & 0.963\tiny{ $\pm$ 0.001} & 0.594\tiny{ $\pm$ 0.028} & 0.782\tiny{ $\pm$ 0.018}\\
		(+) RWPE & - & - & 0.267\tiny{ $\pm$ 0.003} & 0.962\tiny{ $\pm$ 0.001} & 0.628\tiny{ $\pm$ 0.014} & 0.819\tiny{ $\pm$ 0.010}\\
		(+) LapPE & - & - & 0.271\tiny{ $\pm$ 0.002} & 0.961\tiny{ $\pm$ 0.001} & 0.609\tiny{ $\pm$ 0.017} & 0.786\tiny{ $\pm$ 0.015}\\
            (+) SPD \cite{ying2021transformers} & 0.1749 \tiny{$\pm$ 0.0011} & 0.1881 \tiny{$\pm$ 0.0017} & - & - & 0.639 \tiny{$\pm$ 0.006} & 0.823 \tiny{$\pm$ 0.004}\\
            (+) Edge Direction & 0.1751 \tiny{$\pm$ 0.0018} & 0.1870 \tiny{$\pm$ 0.0021} & - & - & 0.636 \tiny{$\pm$ 0.015} & 0.817 \tiny{$\pm$ 0.005}\\
		TF & 0.1546\tiny{ $\pm$ 0.0018} & 0.1670\tiny{ $\pm$ 0.0015} &  0.285\tiny{ $\pm$ 0.004} & 0.957\tiny{ $\pm$ 0.001} &  0.568\tiny{ $\pm$ 0.018} & 0.787\tiny{ $\pm$ 0.003}\\
		\midrule %
		\textbf{DAG+SAT} & 0.1821\tiny{ $\pm$ 0.0013} & 0.1982\tiny{ $\pm$ 0.0010} &  \textbf{0.262}\tiny{ $\pm$ 0.004} & \textbf{0.964}\tiny{ $\pm$ 0.001} & \textbf{0.627}\tiny{ $\pm$ 0.015} & \textbf{0.806}\tiny{ $\pm$ 0.007}\\
		(-) DAGRA & 0.1773\tiny{ $\pm$ 0.0023} & 0.1937\tiny{ $\pm$ 0.0028} &  0.292\tiny{ $\pm$ 0.003} & 0.954\tiny{ $\pm$ 0.001} & 0.598\tiny{ $\pm$ 0.031} & 0.800\tiny{ $\pm$ 0.012} \\
		(-) DAGPE & \textbf{0.1846}\tiny{ $\pm$ 0.0010} & \textbf{0.2018}\tiny{ $\pm$ 0.0021} &  0.282\tiny{ $\pm$ 0.002} & 0.958\tiny{ $\pm$ 0.001} & 0.623\tiny{ $\pm$ 0.014} & 0.806\tiny{ $\pm$ 0.005}\\
          (+) SPD \cite{ying2021transformers} & 0.1851 \tiny{$\pm$ 0.0008} & 0.1991 \tiny{$\pm$ 0.0018} & - & - & 0.627 \tiny{$\pm$ 0.016} & 0.810 \tiny{$\pm$ 0.006}\\
            (+) Edge Direction & 0.1839 \tiny{$\pm$ 0.0014} & 0.1978 \tiny{$\pm$ 0.0028} & - & - & 0.623 \tiny{$\pm$ 0.013} & 0.804 \tiny{$\pm$ 0.007}\\
		SAT & 0.1773\tiny{ $\pm$ 0.0023} & 0.1937\tiny{ $\pm$ 0.0028} &  0.298\tiny{ $\pm$ 0.003} & 0.952\tiny{ $\pm$ 0.001} &  0.598\tiny{ $\pm$ 0.017} & 0.798\tiny{ $\pm$ 0.007}\\
		\bottomrule
	\end{tabular}
        % \vspace{-0.2 in}
	\label{tab:tab6}\label{tab:ablation}
        
\end{table}

\textbf{Ablation Study, Table~\ref{tab:ablation}.}
%\label{ssec:Effect}
Recall that our DAG attention is composed of the reachability-based attention (DAGRA) and DAGPE modules. To justify this design, (1) our ablation studies removing DAGRA and DAGPE individually confirm the impact of both modules;
(2) we also experimented with replacing DAGPE with LapPE \citep{dwivedi2020generalization} and the random-walk-based RWPE \citep{dwivedi2021graph} to show that the DAG-specific nature of our PE is of advantage;
(3) we experimented with adding attention bias which captures the graph structure more directly (e.g., shortest-path \cite{ying2021transformers} and edge directionality), although they are implicit in DAGPE; interestingly, the more direct representation does not make a noticeable difference.
For ease of comparison and interpretation (i.e., in terms of magnitude), we also provide the baseline transformer results. Overall, it can be observed that our DAG attention yields highly consistent performance improvements, although occasionally the advantage is less pronounced. This shows that our architecture design provides the right bias on top of (graph) transformers, which makes the improved models better fit for DAGs.
%effective techniques on top of transformers
%n order to analyse the impact of these modules on the performance of our architecture, and deduce the importance of each modules, we perform ablation study by removing those modules individually or only replacing DAGPE with RWPE \citep{dwivedi2021graph} or LapPE \citep{dwivedi2020generalization}. Table \ref{tab:tab6} is demonstrated that removing those modules causes a performance drop on three datasets. This implies that either RR attention or DAGPE is effective for the DAGs tasks.

\textbf{Impact of Size of Receptive Field $N_k$.}\\
%\label{ssec:Effect}
{  % BEGIN wrapfigure
%
% \begin{wrapfigure}{h}{0.35\linewidth}%%
% \centering
% \includegraphics[width=3.3cm]{} 
% \caption{t-SNE visual. of node embeddings; Cora. Node colors denote labels. %(c) Attention visualization of DAG+Transformer and the Transformer on self-citation. The figure shows the attention weights of the node in the red box learned by DAG attention and the classic self-attention.}
% }
% \label{fig:vis}
% % \end{minipage}
% % \begin{minipage}[t]{0.58\linewidth}
% % \centering
% \end{wrapfigure}
\textbf{(1) Average $n_\infty$, Table~\ref{tab:datasets}.} Compared to the often only two to three hops considered in message-passing GNNs, our $k=\infty$ might seem unrealistic. However, as we show in the table, for two of our three and very different datasets, we have that $n_\infty$ is much smaller than the worst case size $|V|$. NA represents a special case because the graphs are very small so that we can always find a directed path through them that contains all nodes; however, given that $|V|$ is generally small, the exception of the rule $n_\infty<<|V|$ is not critical here.
%We evaluate $N_\infty$ on three DAGs datasets. The statistics of them are shown in Table \ref{tab:tab4}. We observe that $N_\infty$ is much less than $|V|$ on OGBG-CODE2 and self-citation networks. But $N_\infty$ backs to $|V|-1$ on NA dataset, because the graph is too small and we can always find a directed path that contains all nodes.
%Recall that the self-attention mechanism keeps all nodes communicating, which leads to the quadratic complexity with the number of nodes. However, DAG attention only attends to the nodes in DAG reachability relation set. We show that the average size of $\infty$-step reachability relation sets ($N_\infty$) is much less than $|V|$ on two datasets. We also seek to demonstrate that DAG $k$-step predecessors and successors provide crucial predictive information, and study how the choice of $k$ affects the results. 
%
\\
\textbf{(2) Performance for Varying $k$, Table \ref{tab:tab5}, Figure~\ref{result}.}
We ran our model for different $k$ on ogbg-code2, based upon vanilla Transformer and SAT.
%The key contribution of DAG attention is its ability to explicitly incorporate information from DAGs reachability relation. But now the question is how the choice of DAGs reachability relation affects the results. Here, we investigate the impact of $k$-step reachability relation set of different $k$ on OGBG-CODE2 dataset. We implement DAGformer with various DAG attention upon vanilla Transformer and SAT, without using any PE. Table \ref{tab:tab5} shows how the test F1 score is impacted by varying $k$ for $k$-step reachability relation set. 
We find that incorporating reachable node information leads to small but constant improvements in performance as $k$ increases. This confirms our intuition about the importance of predecessors and successors in DAGs, and is in line with related works suggesting to iteratively aggregate the DAGs along the partial order.

\begin{table}
        
    \begin{minipage}[t]{0.50\linewidth}
    \caption{Impact of different $k$ on DAG attention, over ogbg-code2.}
    \centering
    \begin{tabular}{cccc}
    \toprule
    \multirow{2}{*}{$k$} & \multirow{2}{*}{Avg $n_k$} & \multicolumn{2}{c}{Test F1 score}\\
    & & DAG+TF & DAG+SAT\\
    \midrule %
    1 & 1.97 & 0.1724\tiny{ $\pm$ 0.0022}  & 0.1533\tiny{ $\pm$ 0.0108} \\
    2 & 3.91 & 0.1800\tiny{ $\pm$ 0.0023}  & 0.1918\tiny{ $\pm$ 0.0010} \\
    3 & 5.69 & 0.1842\tiny{ $\pm$ 0.0006}  & 0.1934\tiny{ $\pm$ 0.0009} \\
    4 & 7.16 & 0.1849\tiny{ $\pm$ 0.0021}  & 0.1975\tiny{ $\pm$ 0.0019} \\
    5 & 8.29 & 0.1856\tiny{ $\pm$ 0.0020}  & 0.2000\tiny{ $\pm$ 0.0005} \\
    6 & 9.01 & 0.1869\tiny{ $\pm$ 0.0020}  & 0.2000\tiny{ $\pm$ 0.0015} \\
    7 & 9.40 & 0.1869\tiny{ $\pm$ 0.0019}  & 0.2004\tiny{ $\pm$ 0.0014} \\
    8 & 9.60 & 0.1875\tiny{ $\pm$ 0.0008}  & 0.2005\tiny{ $\pm$ 0.0010} \\
    $\infty$ & 9.78 & 0.1879\tiny{ $\pm$ 0.0015}  & 0.2018\tiny{ $\pm$ 0.0021} \\
    \bottomrule
\end{tabular}
\label{tab:tab5}

    \end{minipage}
% \end{table}
    \begin{minipage}[t]{0.50\linewidth}
   \center{\includegraphics[width=7cm]  {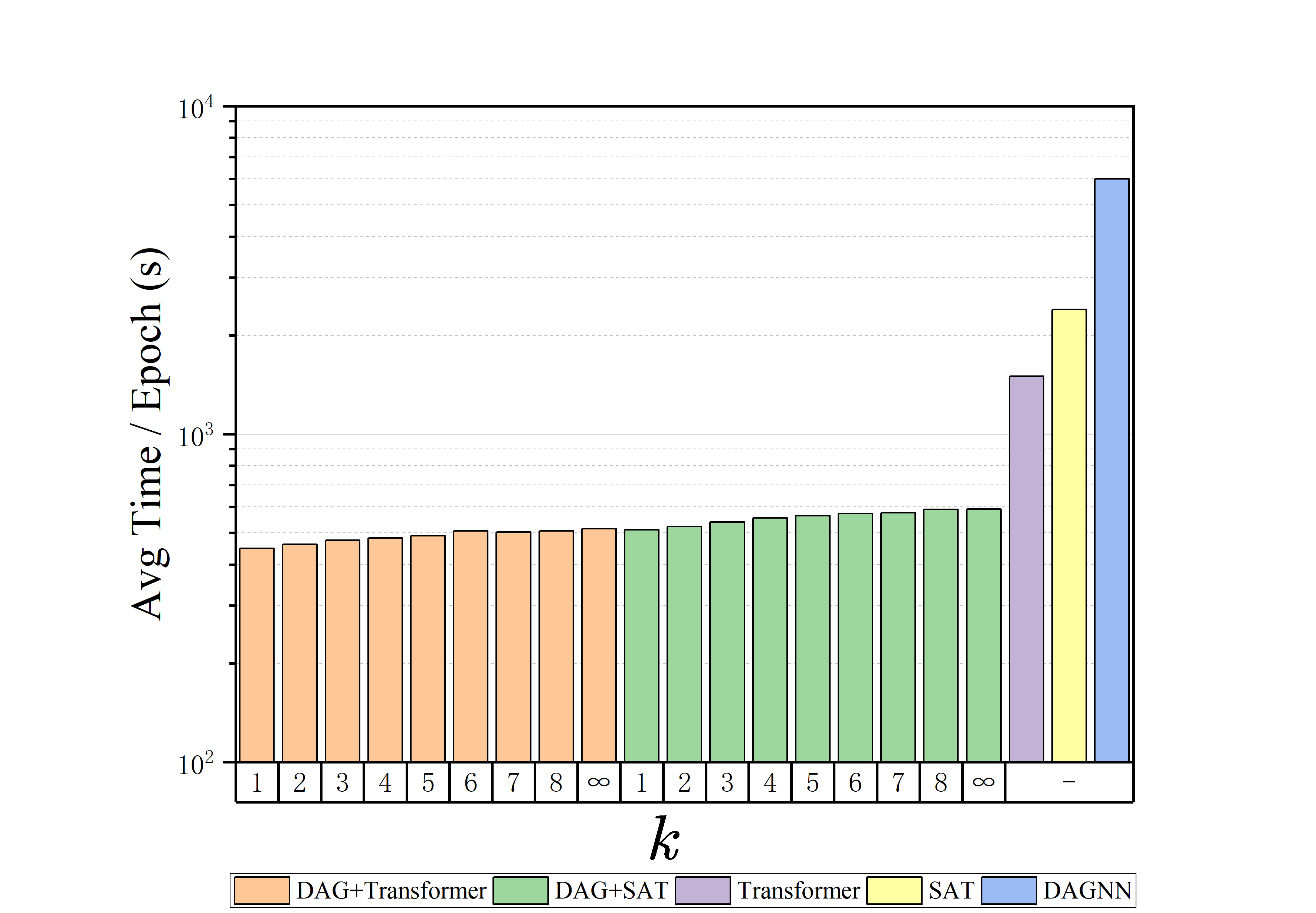}}
   \captionsetup{type=figure}
   \caption{\label{result}Average training time per epoch for various $k$ over ogbg-code2, log scale.}  
    \end{minipage}
     \vspace{-0.5cm}
\end{table}

Figure \ref{result} depicts the training time per epoch. % comparing DAG attention to several baselines. 
It is easy to see that \emph{our framework reduces computation time considerably}. For example, even the most time-consuming DAG+SAT requires 10 minutes per epoch, compared to 40 minutes for SAT and 100 minutes for DAGNN on the same GPU type. Since SAT and DAGNN represent the best-performing models among the transformers and, respectively, message-passing GNNs, and DAG attention yields qualitative improvements over both. %we can conclude that our framework actually provides \emph{considerable} improvements in runtime.

% \textbf{Visualization.} %We first visualize the feature representations of nodes learned by DAG models to study the learned patterns. Specifically, w
% We extracted the feature representations from the last layer of (DAG+)Nodeformer on the Cora, and mapped them into 2D space using t-SNE \citep{van2008visualizing}, as shown in Figure~\ref{fig:vis}. Different classes are better separated in the embedding space of DAG+Nodeformer. In Appendix C, we additionally visualize the model weights and show that DAG attention may remove considerable noise. 

% } % END wrapfigure

%\caption{fig2}
%

% In addition to t-SNE, we demonstrate that the DAG transformer also provides interpretability compared to the traditional Transformer model. We trained a DAG+Transformer and a Transformer model on a self-citation graph, and compared the attention scores between the selected node and other nodes learned by both models. Figure~\ref{vis}(c) visualizes the results. In the leftmost graph, the darker blue of the nodes represents a higher number of citations. For the selected node $v$, DAG attention only puts attention weights on the nodes that have citation relations with it ($N_k(v)$) , whereas classic self-attention focuses on nodes that are almost unrelated to $v$. More
% focus on these core nodes makes the DAG model less influenced by other patterns and thus leads to improved performance.
\section{Conclusions}
\label{sec:majhead}
Based on the insights from graph neural networks and models for directed acyclic graphs,
we have developed a transformer model biased towards DAGs, which allows for incorporating the main characteristic of DAGs - their partial order - into any transformer architecture, by encoding the reachability relation and positions resulting from it.
% DAGformer, a Transformer-based model for a special yet widely used class of graphs—DAGs.
Our architecture is simple and universal as we demonstrated in our experiments. Most importantly, it provides an effective extension improving existing graph transformers over DAGs. 
While our framework has successfully lifted existing (graph) transformers to the level of the state-of-the-art neural networks tailored to DAGs, our experiments show that there is still room for improvement. There is a variety of challenging tasks over many different kinds of DAGs, which are not fully solved yet; for instance, to tackle the challenge of reasoning over source code graphs, the models will probably need better language understanding. Hence DAG representation learning remains interesting for research, and our work provides an important contribution in closing the gap between transformers and other DAG neural networks. 
\shil{Conclusion is too short now. Also consider to add future works.}

\textbf{Limitations.}
Our study is very general, but we found only a limited number of DAG datasets. We tried to resolve this by creating new ones (from the established citation datasets), yet we acknowledge that there is still room for extension. Further, there are likely specific types of DAGs which benefit from more customized modeling. Yet, our study is intended to address a general adaptivity to DAGs.

\textbf{Broader Impact.} Transformers have been popular in artificial intelligence, and it is expected that they also gain importance in graph learning. We hope to provide a tiny piece to the advancement of the area. Our framework is rather abstract, but its simplicity, efficiency, and universality make it practically usable; and we demonstrate good performance on a variety of~data.

\begin{ack}
We extend our gratitude to Yicheng Pan for his invaluable assistance with the computational complexity analysis. We also express our appreciation to all the anonymous reviewers for their insightful and constructive feedback. This work was supported by National Key R\&D Program of China (2021YFB3500700), NSFC Grant 62172026, National Social Science Fund of China 22\&ZD153, the Fundamental Research Funds for the Central Universities and SKLSDE.
\end{ack}

% \section{Supplementary Material}

% Authors may wish to optionally include extra information (complete proofs, additional experiments and plots) in the appendix. All such materials should be part of the supplemental material (submitted separately) and should NOT be included in the main submission.

\bibliography{neurips}
%%%%%%%%%%%%%%%%%%%%%%%%%%%%%%%%%%%%%%%%%%%%%%%%%%%%%%%%%%%%
\clearpage

\appendix

\section{Proofs}\label{stylefiles}

\subsection*{Theoretical Intuition of DAG Attention}
\begin{proof}
Personalized PageRank is defined as $\pi_{G}(i_x) = (1-\alpha)A_{rw}\pi_{G}(i_x)+\alpha i_x$. Through rearrangement, we obtain:
$(I_n-(1-\alpha)A_{rw})\pi_{G}(i_x) = \alpha i_x$. Now we prove that $(I_n-(1-\alpha)A_{rw})$ is an invertible matrix. The matrix is invertible iff the determinant det$(I_n-(1-\alpha)A_{rw})\neq0$, which is the case iff det$(A_{rw}- \frac{1}{1-\alpha}I_n)\neq0$,  i.e., iff $\frac{1}{1-\alpha}$ is not an eigenvalue of $A_{rw}$.  This value is always larger than 1 since the teleport probability
$\alpha \in (0, 1]$. However, the largest eigenvalue of the row-stochastic matrix $A_{rw}$ is 1, as can be proven using the Gershgorin circle theorem. Hence, $\frac{1}{1-\alpha}$ cannot be an eigenvalue and $(I_n-(1-\alpha)A_{rw})$ is an invertible matrix. So we obtain $\pi_{G}(i_x)=\alpha (I_n-(1-\alpha)A_{rw})^{-1}i_x$. Because the spectral radius (largest eigenvalue) of matrix $(1-\alpha)A_{rw}$ is less than 1, from the Neumann series theorem, we obtain 
\begin{linenomath*}
$$(I_n-(1-\alpha)A_{rw})^{-1} = \sum_{m=0}^{\infty}{\left( \left( 1-\alpha \right) A_{rw} \right) ^m}= \sum_{m=0}^{\infty}{\left( \left( 1-\alpha \right) AD^{-1} \right) ^m}.$$
\end{linenomath*}
Because $D$ is a diagonal matrix and $1-\alpha$ is a constant, they do not affect whether an element in the matrix is 0 or not. Therefore, we only need to discuss $\sum_{m=0}^{\infty}{A ^m}$. Because $A$ is the adjacency matrix of the DAG $G$, in this matrix, iff there is not a directed path from $x$ to $y$ ($x \nleqslant y$), we deduce that $(\sum_{m=0}^{\infty}{A ^m})[x, y] = 0$. This leads us to the conclusion that $\pi_{G}(i_x)[y] = 0$.

We invert the directions of the edges in $G$ to create a reverse DAG $\tilde{G}$.
We can also obtain that, iff $x \nleqslant y$, then $\pi_{\tilde{G}}(i_x)[y] = 0$.

Therefore, for every node $x$, only nodes $y$ that $y \notin N_\infty(x)$ will satisfy that $\pi_{G}(i_x)[y]+\pi_{\tilde{G}}(i_x)[y]$ equals 0.
\end{proof}

\begin{theorem} \label{th1}
    In a directed rooted tree $T=(V,E)$ with the number of nodes $|V|$, the runtime of DAG attention is $O(|V|\times k\times \Delta^+ \times d)$, where $\Delta^+$ is the maximum out degree of the $T$ and $d$ is feature dimension. When $k=\infty$, the runtime of DAG attention is $O(|V|\times \ell\times \Delta^+ \times d)$, where $\ell$ is the depth of $T$.
\end{theorem}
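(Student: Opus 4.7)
The plan is to decompose DAG attention into $k$ rounds of local message passing on $T$ (and on the reverse tree $\tilde T$), bound the per-round per-node cost by $O(\Delta^+\cdot d)$, and then handle the two cases by summing and by arguing convergence.

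First I would describe one round of the computation at a fixed node $v$. Using the previous proof, the effective receptive field is the reachability set $N_\infty(v)$ in $T\cup\tilde T$, which is generated by iterating the one-hop aggregation. In one round, node $v$ combines representations from its out-neighbors in $T$ and its out-neighbors in $\tilde T$ (i.e.\ its parent); since the tree has maximum out-degree $\Delta^+$, the local attention-weighted aggregation touches at most $\Delta^+$ neighbors, each carrying a $d$-dimensional feature, giving $O(\Delta^+\cdot d)$ work.

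Next I would sum. Summing the per-node cost over all $|V|$ nodes yields $O(|V|\cdot\Delta^+\cdot d)$ per round, and performing $k$ rounds of propagation gives the claimed bound $O(|V|\cdot k\cdot\Delta^+\cdot d)$. For the case $k=\infty$, I would invoke the Neumann-series expression from the previous proof together with the acyclic structure of $T$: in a rooted tree of depth $\ell$, every directed path has length at most $\ell$, so $A^m=0$ for $m>\ell$ and the series $\sum_{m=0}^{\infty} A^m$ truncates at $m=\ell$. Consequently the propagation stabilizes after at most $\ell$ rounds, and substituting $k=\ell$ into the finite-$k$ bound gives $O(|V|\cdot\ell\cdot\Delta^+\cdot d)$.

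The main obstacle is verifying that a single round of DAG attention can indeed be implemented in $O(\Delta^+\cdot d)$ time per node (rather than, say, $O(|N_\infty(v)|\cdot d)$), which requires arguing that the attention weights at round $m$ can be constructed recursively from the round-$(m{-}1)$ weights through only the one-hop neighbors in $T$ and $\tilde T$. The remaining steps — summing over nodes and truncating the Neumann series on a DAG of depth $\ell$ — are routine once the per-round cost is established.
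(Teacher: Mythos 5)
There is a genuine gap, and it is exactly the one you flag at the end as ``the main obstacle'': your bound rests on the claim that one round of DAG attention costs $O(\Delta^+\cdot d)$ per node, i.e.\ that the attention output for $v$ over its receptive field can be built recursively from one-hop aggregations. For attention this does not go through. The scores $\langle q_v, k_u\rangle$ and the softmax normalization are specific to the query node $v$, so if you try to accumulate the numerator $\sum_u e^{\langle q_v,k_u\rangle}V_u$ and denominator along tree edges, every edge must carry a separate $d$-dimensional accumulator for each destination $v$ whose receptive field crosses it. The total work is then again proportional to $\sum_{v\in V}|N_k(v)|\cdot d$, which is precisely the quantity the theorem is about; nothing has been gained. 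You cannot treat weighted-softmax attention like fixed-weight message passing, so the $O(|V|\cdot k\cdot\Delta^+\cdot d)$ conclusion is asserted rather than proved, and your $k=\infty$ case inherits the same defect (the Neumann-series truncation correctly shows $N_\infty(v)=N_\ell(v)$, but it is being substituted into an unestablished bound).

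The paper takes the route you were implicitly avoiding: it accepts that the cost at node $v$ is $O(|N_k(v)|\cdot d)$ and proves the combinatorial fact $\sum_{v\in V}|N_\infty(v)|\leqslant |V|\times \ell\times \Delta^+$ directly, by induction on the depth of the tree. The inductive step removes the root $r$, applies the hypothesis to the $s\leqslant\Delta^+$ subtrees of depth at most $\ell$, and accounts for the $2(|V|-1)$ reachability pairs involving $r$; the case $\Delta^+=1$ is handled separately. That counting argument is the real content of the theorem and is absent from your proposal. To repair your proof you would need to either supply this bound on the total size of the receptive fields, or restrict to an aggregation scheme whose per-node update genuinely depends only on one-hop neighbors --- which DAG attention, as defined, is not.
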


\begin{proof}
       
    In DAG attention, for each node $v$, we need to calculate the attention to $N_k(v)$, whose runtime is upper bounded by $O({|N_k(v)|}\times d)$. So the runtime of DAG attention is linear in the sum of the sizes of receptive fields $ \sum_{v\in V}{|N_k(v)|}$. We only need to show that $ \sum_{v\in V}{|N_k(v)|}\leqslant |V|\times k\times \Delta^+$. When $k=\infty$, $ \sum_{v\in V}{|N_k(v)|}\leqslant |V|\times \ell\times \Delta^+$.

    Next, we prove this theorem for the case of $k=\infty$. For bounded $k$, the proof is similar.
    We prove by induction on the depth $\ell$ of $T$. 
    
    We assume that $\Delta^+ \geqslant 2$, since when $\Delta^+ =1$, $T$ is a 1-ary tree, $ \sum_{v\in V}{|N_\infty(v)|} = |V| \times |V| \leqslant |V|\times \ell\times \Delta^+ $.

    For the basis step, when $\ell=0$, $T$ only has one node. So $ \sum_{v\in V}{|N_\infty(v)|}=0\leqslant |V|\times \ell\times \Delta^+ $.

    For the inductive step, we assume that \\
    $ \sum_{v\in V}{|N_\infty(v)|}\leqslant |V|\times \ell\times \Delta^+ $ holds when $\ell \geqslant 0$. Consider a tree $T_{new}$ of depth $\ell+1$ whose root is denoted by $r$. Then the sub-trees whose roots are the children of $r$ have depth at most $\ell$. Let $s$ be the number of these sub-trees, $s\leqslant \Delta^+$, and $V_i$ be the node set of the $i$-th sub-tree. Since the number of reachability relations with $r$ involved is $2(|V|-1)$, we have
    
    \begin{eqnarray*}
    &&\sum_{v\in V}{|N_\infty(v)|} \\
    &=& \sum_{i\in[s]}\sum_{v\in V_i}{|N_\infty(v)|}+2(|V|-1)  \\
    &\leqslant& |V_1|\times \ell\times \Delta^++...+|V_{s}|\times \ell\times \Delta^++2(|V|-1) \\
    &\leqslant& \ell\times \Delta^+ \times (|V|-1) + 2(|V|-1)  \\
    &\leqslant& \ell\times \Delta^+ \times (|V|-1) + \Delta^+ \times (|V|-1) \\
    &\leqslant& (\ell+1)\times \Delta^+ \times |V|
    \end{eqnarray*}
    This completes the proof of Theorem \ref{th1} for the case of $k=\infty$.
\end{proof}

\section{Experimental Details}\label{stylefiles}

% \subsection{Computation Details} Our implementation is based on PyG \citep{fey2019fast} that is provided under
% MIT License. All experiments were performed on two RTX 3090 GPUs. The reported results are all based on the average score of 10 runs using different random seeds.
\subsection{Computing Environment} Our implementation is based on PyG \citep{fey2019fast}. All reported results are averaged over 10 runs using different random seeds. The experiments are conducted with two RTX 3090 GPUs. 

\subsection{Dataset Details}
We provide details of the datasets used in our experiments, including ogbg-code2 \citep{hu2020open}, Neural architectures (NA) \citep{zhang2019d}, Self-citation, Cora, Citeseer and Pubmed \citep{sen2008collective}.\\
\textbf{ogbg-code2.} ogbg-code2 \citep{hu2020open} is a dataset containing 452,741 Python method definitions extracted from thousands of popular Github repositories. It is made up of Abstract Syntax Trees as DAGs where the task is to correctly classify the sub-tokens that comprise the method name. The min/avg/max numbers of nodes in the graphs are 11/125/36123, respectively. We use the standard evaluation metric (F1 score) and splits provided by \citep{hu2020open}.\\
\textbf{Neural architectures (NA).} This dataset is created in the context of neural architecture search. It contains 19,020 neural architectures generated from the ENAS software \citep{pham2018efficient}. Each neural architecture has 6 layers (i.e., nodes) sampled from 6 different types of components, plus an input and output layer. The input node vectors are one-hot encodings of the component types. The weight-sharing accuracy \citep{pham2018efficient} (a proxy of the true accuracy) on CIFAR-10 \citep{krizhevsky2009learning} is taken as performance measure. Since it is a regression task, the metrics are RMSE and Pearson’s r. We use the splits as is used in \citep{thost2021directed}. Details about the generation process can be found in \citep{zhang2019d}.\\
\textbf{Self-citation.} The academic self-citation dataset is a directed acyclic graph, representing the scholar's academic self-citation network on the natural language processing (NLP) field extracted from ACL Anthology Reference Corpus \citep{ARC}. This dataset contains 1000 academic self-citation networks from scholars ranked top-1000 by number of papers. In a self-citation networks, each node is a paper of this scholar and each directed edge indicates that one paper is cited by another one. Each graph node includes two quantitative attributes: the publication year, the paper’s total citation count. The task is to predict missing citation counts given existing citation counts. Specifically, for each scholar self-citation network, we randomly select 10\% of its papers and drop their citation counts, and the node-level task is to predict whether a paper which misses its citation count is highly-cited or not. In Computer Science, Engineering, and Mathematics, for papers in ISI (Web of science database) listed journals that are 10 years old, around 20 citations gets in the top 10\%. So we consider papers with a citation count greater than or equal to 20 as highly-cited papers. As the class balance is skewed (only 27.2\% of data is positive), we use the Average Precision (AP) and ROC-AUC as the evaluation metrics. We randomly split the dataset into 80\% training, 10\% valid and 10\% test sets. The statistics is shown in Table \ref{tab:tab7}.\\
\textbf{Cora, Citeseer and Pubmed} \citep{sen2008collective}. These three datasets are commonly used citation networks for evaluating models on node classification tasks. These datasets are medium-scale networks (with 2K\textasciitilde 20K
nodes) and the goal is to classify the topics of documents (instances) based on input features of each instance (bag-of-words representation of documents) and graph structure (citation links). For the baselines, the citation links are treated as undirected edges \citep{wu2022nodeformer} and each document has a class label. Only for our method, we treated citation links as directed edges and removed a small number of cyclic citation links to make them DAGs. We use the same evaluation metric (classification accuracy) and splits provided by NodeFormer \citep{wu2022nodeformer}.

\begin{table}
    \centering
    
    \caption{The statistics of academic self-citation dataset.} 
    \begin{tabular}{cccc}
        \toprule
         & Min  & Avg & Max \\ 
        \midrule %
        \# nodes & 30 &59  & 296 \\
        DAG depth & 1 &5 & 21 \\
        Node degree & 0 &1.8 & 83 \\
        \bottomrule
    \end{tabular}
    \label{tab:tab7}
\end{table}

\subsection{Hyperparameters and Reproducibility}
\textbf{ogbg-code2.} We implemented DAG attention on vanilla Transformer, GraphTrans, GraphGPS and SAT that we only modify the self-attention module to DAG attention module. And we do not use any PE as well as baselines because it makes very little performance improvement \citep{chen2022structure}. For fair comparisons, we use the same hyperparameters (including training schedule, optimizer, number of layers, batch size, hidden dimension etc.) as baseline models for all of our four versions.\\
\textbf{NA.} We train the transformer-based baselines (vanilla Transformer, GraphGPS and SAT) in front of DAGNN \citep{thost2021directed}. For SAT, we use the 3-subtree GNN extractor (GIN). For GraphGPS, we use GatedGCN as GPS-MPNN and vanilla Transformer as GPS-GlobAttn. And we implement DAG attention on those respectively. We follow the exact training settings of DAGNN \citep{thost2021directed}. Then we also train a sparse Gaussian process (SGP) \citep{snelson2005sparse} with a learning rate of 4e-4, a epochs number 200 as the predictive model to evaluate the performance of the latent representations. The transformer-based hyperparameters are summarized in Table \ref{tab:tab8}. All other hyperparameters are the same as those of DAGNN \citep{thost2021directed}.\\
\textbf{Self-citation.} We implement DAG attention on the vanilla Transformer, GraphGPS, SAT and NodeFormer. In general, we perform a hyperparameter search to produce the results for our model and baselines. The hyperparameters on the academic self-citation dataset are summarized in Table \ref{tab:tab8}, where \# Attention heads is tuned for transformer-based models. For SAT, we conduct positional encodings search on LapPE-8, RWPE-20 or None, and we use the 3-subtree GNN extractor (GIN). For GraphGPS, we use GatedGCN as GPS-MPNN and vanilla Transformer as GPS-GlobAttn. The other hyper-parameters for NodeFormer are the default parameters in their public code. In all experiments, we use the validation set to select the best hyperparameters. All our models and baselines are trained with the AdamW optimizer \citep{loshchilov2016sgdr} and we use the cross-entropy loss on this classification task. The learning rate scheduler is the cosine scheduler \citep{loshchilov2016sgdr}. \\
\textbf{Cora, Citeseer and Pubmed}. We implement DAG attention on the vanilla Transformer, SAT and NodeFormer. Only for Pubmed, we choose the size of the receptive field $k = 2$ (otherwise $k = \infty$). And we perform a hyperparameter search to produce the results for our model and those transformer-based baselines in Table \ref{tab:tab8}. We follow all other training setting and hyperparameters of NodeFormer \citep{wu2022nodeformer}. For SAT, we use the 1-subtree GNN extractor (GCN). 

In Table \ref{tab:tab9}, we report the number of parameters for DAG+ models using the hyperparameters selected from Table \ref{tab:tab8}.

\begin{table}
    \centering
    \caption{Hyperparameter search on different datasets.} 
    \begin{tabular}{cccc}
        \toprule
        Hyperparameter &NA & Self-citation & Cora, Citeseer and Pubmed   \\ 
        \midrule %
        \# Layers &\{2,3\} & \{2, 3, 4, 5\} & \{2, 3, 4, 5\} \\
        Hidden dimensions &\{32,128\} & \{32, 64, 128\} & \{16, 32, 64, 128\} \\
        Dropout &0.2 & \{0.1, 0.2, 0.5\} & 0.0 \\
        Learning rate &1e-3 & \{1e-4, 5-e4, 1e-3, 5e-3\} & \{1e-3, 1e-2\}  \\
        \# Epochs &100 & \{50, 100\} & 1000\\
        Weight decay &None & 1e-6 & 5e-3 \\
        \midrule %
        \# Attention heads &\{2, 4\} & \{2, 4, 8\} &\{2, 4\} \\
        \bottomrule
    \end{tabular}
    \label{tab:tab8}
\end{table}

\begin{table}
    \centering
    
    \caption{Number of parameters for DAG+ models.} 
    \begin{tabular}{ccccccc}
        \toprule
        Model & ogbg-code2 & NA & Self-citation & Cora &Citeseer &Pubmed \\ 
        \midrule %
        DAG+Transformer & 127,06k & 399k & 478k & 291k & 404k & 14k\\
        DAG+SAT & 149,53k & 631k & 710k & 276k & 165k & 14k \\
        \bottomrule
    \end{tabular}
    \label{tab:tab9}
\end{table}

\section{Model Visualization}\label{stylefiles}
Here, we demonstrate that the DAG transformer also provides interpretability compared to the traditional Transformer model. We trained a DAG+Transformer and a Transformer model on a self-citation graph, and compared the attention scores between the selected node and other nodes learned by both models. Figure~\ref{fig:vis_} visualizes the results. In the leftmost graph, the darker blue of the nodes represents a higher number of citations. For the selected node $v$, DAG attention only puts attention weights on the nodes that have citation relations with it ($N_k(v)$), whereas classic self-attention focuses on nodes that are almost unrelated to $v$. More
focus on these core nodes makes the DAG model less influenced by other patterns and thus leads to improved performance.

\begin{figure*}[h]   \center{\includegraphics[width=12cm]{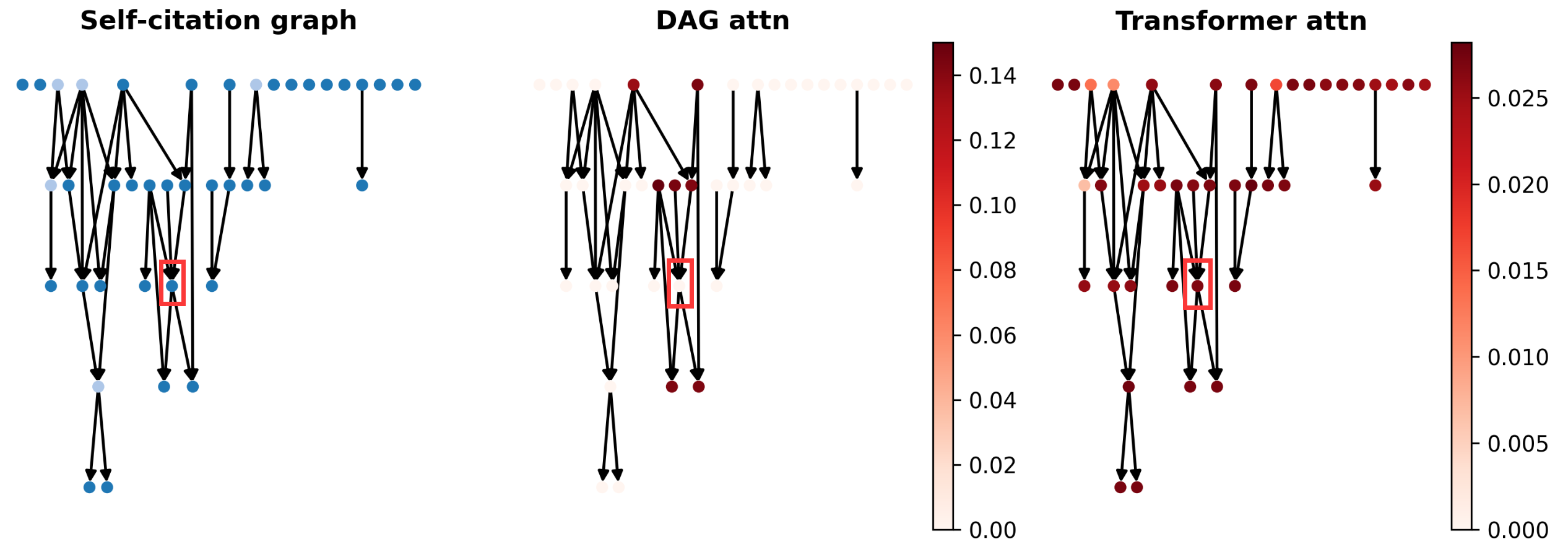}}   \caption{Attention visualization of DAG+Transformer and the Transformer on self-citation. The figure shows the attention weights of the node in the red box learned by DAG attention and the classic self-attention.}  \label{fig:vis_} \end{figure*}

\section{Additional Results}\label{stylefiles}

\subsection{Semi-supervised Node Classification}
We test our model on three citation networks Cora, Citeseer and Pubmed. We implement DAG attention on DIFFormer-a \cite{wu2023difformer}. Table \ref{tab:tab10} reports the testing accuracy. This show that our framework likely increases both quality and runtime here as well.

\begin{table}[h]

    \centering
     \caption{\textbf{Node classification accuracy} (\%).% over Cora, Citeseer. 
     The baseline results were taken from \citep{wu2023difformer}.}
    \begin{tabular}{ccccccc}
        \toprule
        Model & \multicolumn{2}{c}{Cora} & \multicolumn{2}{c}{Citeseer} & \multicolumn{2}{c}{Pubmed} \\
        & Accuracy & Time (s) & Accuracy	& Time (s) & Accuracy	& Time (s) \\
        \midrule %
        NodeFormer & 83.4\tiny{ $\pm$ 0.2} & - & 73.0\tiny{ $\pm$ 0.3} & - & 81.5\tiny{ $\pm$ 0.4}& -\\
        DIFFORMER-a & 84.1\tiny{ $\pm$ 0.6} & 14.21 & 75.7\tiny{ $\pm$ 0.3} & 8.48 & 80.5\tiny{ $\pm$ 1.2}& 1013.31 \\
        DAG + DIFFORMER-a & 85.1\tiny{ $\pm$ 0.7}& 10.96  & 76.2\tiny{ $\pm$ 0.4}& 7.01 & 81.6\tiny{ $\pm$ 0.5}& 84.15 \\
        \bottomrule
    \end{tabular}
    \label{tab:tab10}\label{tab:citation}
\end{table}

\subsection{MalNet-Tiny}
Here we consider MalNet-Tiny. 
MalNet-Tiny \citep{freitas2021large} is a subset of MalNet which consists of function call graphs (FCGs) obtained from Android APKs. This subset includes 5,000 graphs with a maximum of 5,000 nodes, originating from benign software or 4 malware types. The FCGs do not have any original node or edge features. The benchmark version of this dataset usually employs Local Degree Profile as the set of node features. And the objective is to predict the type of software solely based on the structure.

Note that we did not do any hyperparameter tuning. Further, we implemented DAG attention on top of GraphGPS by only modifying the self-attention module, switching it to DAG attention. As shown in Table \ref{tab:tab11}, Our DAG+GraphGPS achieves a test accuracy of 93.45\% while only requiring 20 seconds per epoch. This demonstrates the quality and efficiency of DAG attention.

\begin{table}
    \centering
    \caption{\textbf{Graph classification} on MalNet-Tiny.}
    \begin{tabular}{ccc}
        \toprule
        Model & Accuracy (\%) & Time(epoch)\\
        \midrule 
        GraphGPS & 92.64 $\pm$ 0.78  &46s \\
        \textbf{DAG+GraphGPS} & 93.45 $\pm$ 0.41  &20s \\
        \bottomrule
        \end{tabular}
    \label{tab:tab11}
\end{table}

\section{Further Related Works}\label{stylefiles}
\textbf{Neural Networks on Directed Graphs.} 
Most works we found focus on adapting the Laplacian:
% We found most previous work on Transformers for directed graphs focused on the positional encoding. For example, 
\citep{ma2019spectral} construct a directed Laplacian matrix to improve the propagation model by using identities that involve the random walk matrix and its stationary distribution. \citep{monti2018motifnet} consider several different symmetric Laplacian matrices corresponding to local directed graph motifs. Moreover, DiGCN \citep{tong2020digraph} is based on a directed Laplacian with a PageRank matrix to learn multi-scale features and \citep{zhang2021magnet} proposed MagNet, a GNN for directed graphs based on the Magnetic Laplacian.
The latter has been recently integrated into Transformers in the form of a positional encoding \citep{geisler2023transformers}. Note that this improves predictive performance in directed graphs, but does impact the (quadratic) complexity of self-attention.

\textbf{Transformers over Trees.} 
Observe that trees are special in that they are a particular, more restricted type of DAGs. 
\citep{sun2020treegen} proposed TreeGen, a tree-based Transformer that combines grammar rules and tree structure using a tree reader. However, it is not directly relevant to our work as it focuses on program-specific information.
In addition, there are some methods of leveraging tree paths and integrating them into the self-attention module. \citep{alon2018code2seq} used AST to represent a source code snippet. They considered pairwise paths between leaf nodes, representing them as sequences of the leaf and non-leaf nodes in AST. \citep{kim2021code} used another type of path that goes from the leaf nodes up to the AST root by traversing its ancestors. 
\citep{zugnerlanguage} presented Code Transformer, which combines distances computed on ASTs and context of source code in the self-attention. Specifically, it utilizes four different paths to reason about ASTs relative positions. 
Furthermore, \citep{peng2022rethinking} designed a new directed tree Transformer position encoding for each node based on a two-dimensional description of tree structures.
In contrast to these works, we shift our focus toward more general DAGs, use a very simple and intuitive PE, and also adapt the receptive field of the attention.

% \newpage

% \bibliography{neurips}
% \end{document}

\end{document}